\definecolor{orange}{HTML}{ff6c0c}
\definecolor{blue}{HTML}{1f77b4}
\definecolor{Gray}{gray}{0.85}
\definecolor{LightCyan}{rgb}{0.88,1,1}
\def\@onedot{\ifx\@let@token.\else.\null\fi\xspace}
\DeclareRobustCommand\onedot{\futurelet\@let@token\@onedot}
\definecolor{blue1}{RGB}{0,128,255}
\definecolor{blue3}{RGB}{0,0,128}
\definecolor{darkpastelgreen}{rgb}{0.01, 0.75, 0.24}
\definecolor{cerulean}{rgb}{0.0, 0.48, 0.65}
\definecolor{darkgreen}{rgb}{0,0.6,0}
\def\eqref#1{equation~\ref{#1}}
\def\1{\bm{1}}
\def\prop{\ \propto\ }
\def\rvu{{\mathbf{i}}}
\def\rvn{{\mathbf{n}}}
\def\rvs{{\mathbf{s}}}
\def\rvu{{\mathbf{u}}}
\def\rvw{{\mathbf{w}}}
\def\rvx{{\mathbf{x}}}
\def\rvy{{\mathbf{y}}}
\def\rvz{{\mathbf{z}}}
\def\vtheta{{\bm{\theta}}}
\def\mD{{\bm{D}}}
\def\mG{{\bm{G}}}
\def\mI{{\bm{I}}}
\def\mQ{{\bm{Q}}}
\def\mR{{\bm{R}}}
\DeclareMathAlphabet{\mathsfit}{\encodingdefault}{\sfdefault}{m}{sl}
\SetMathAlphabet{\mathsfit}{bold}{\encodingdefault}{\sfdefault}{bx}{n}
\DeclareMathOperator*{\argmax}{arg\,max}
\theoremstyle{plain}
\newtheorem{theorem}{Theorem}
\newtheorem{lemma}{Lemma}
\newcommand{\method}{SGDD}
\title{Split Gibbs Discrete Diffusion Posterior Sampling}
\author{%
    Wenda Chu$^{1}$\qquad Zihui Wu$^{1}$ \qquad Yifan Chen$^2$ \qquad
    \textbf{Yang Song}$^3$ \qquad \textbf{Yisong Yue}$^{1}$\\
    $^{1}$California Institute of Technology \qquad 
    $^{2}$New York University \qquad
    $^{3}$OpenAI
}
\begin{document}

\maketitle

\begin{abstract}
  We study the problem of posterior sampling in discrete-state spaces using discrete diffusion models. While posterior sampling methods for continuous diffusion models have achieved remarkable progress, analogous methods for discrete diffusion models remain challenging. In this work, we introduce a principled plug-and-play discrete diffusion posterior sampling algorithm based on split Gibbs sampling, which we call~\method. Our algorithm enables reward-guided generation and solving inverse problems in discrete-state spaces. We demonstrate the convergence of~\method~to the target posterior distribution and verify this through controlled experiments on synthetic benchmarks. Our method enjoys state-of-the-art posterior sampling performance on a range of benchmarks for discrete data, including DNA sequence design, discrete image inverse problems, and music infilling, achieving more than $30\%$ improved performance compared to existing baselines. Our code is available at \href{https://github.com/chuwd19/Split-Gibbs-Discrete-Diffusion-Posterior-Sampling}{https://github.com/chuwd19/Split-Gibbs-Discrete-Diffusion-Posterior-Sampling}.
\end{abstract}

\section{Introduction}
\label{sec:intro}

We study the problem of posterior sampling in finite spaces with discrete diffusion models. Given a pretrained discrete diffusion model~\cite{sohl2015deep,austin2021structured,campbell2022continuous,lou2024discrete} that models the prior distribution $p(\rvx)$, our goal is to sample from the posterior $p(\rvx|\rvy)$ in a plug-and-play fashion, where $\rvy$ represents a guiding signal of interest. 
This setting encompasses both solving inverse problems (e.g., if $\rvy$ is an incomplete measurement of $\rvx$), and conditional or guided generation (e.g., sampling from $p(\rvx)\exp(\beta r(\rvx)))/Z$ according to a reward function $r(\cdot)$).

Existing works on diffusion posterior sampling~\cite{chung2023diffusion,song2023pseudoinverseguided,mardani2023variational,wu2024principled} primarily focus on continuous diffusion models operating in the Euclidean space for $\rvx$. These methods have achieved remarkable success in applications including natural image restorations~\cite{kawar2022denoising,chung2023diffusion,zhu2023denoising}, medical imaging~\cite{jalal2021robust,song2022solving,chung2022score}, and various scientific inverse problems~\cite{feng2023score,wu2024principled,zheng2024ensemblekalmandiffusionguidance}.
However, posterior sampling with discrete diffusion models in discrete-state spaces remains a challenging problem, mainly due to not having well-defined gradients of the likelihood function $p(\rvy|\rvx)$. 
Initial work to address this gap for discrete diffusion models relies on approximating value functions (similar to reinforcement learning) for derivative-free sampling~\citep{li2024derivativefree}, which can be difficult to tune for good performance, particularly when the guidance $\rvy$ is complicated.

In this paper, we develop a rigorous plug-and-play method based on the principle of split Gibbs sampling~\cite{vono2019split},  which we refer to as~\method.  Our method addresses the posterior sampling problem $\rvx\sim p(\rvx|\rvy)$ by introducing an auxiliary variable $\rvz$ along with a regularization potential function $D(\rvx,\rvz;\eta)$ to relax the original problem. The relaxed formulation enables efficient sampling via Gibbs sampling, which alternates between a likelihood sampling step and a prior sampling step using a pretrained discrete diffusion model. As regularization increases, both variables $\rvx$ and $\rvz$ converge to the target posterior distribution. 
Our method generalizes split Gibbs samplers with $\ell_2$ regularizers to any potential functions with certain limiting conditions, making it compatible with discrete diffusion models by carefully designing the potential function. We provide a rigorous stationary guarantee on the convergence of~\method, with error bounds that account for imperfect score function and discretization errors.

We validate~\method~on diverse inverse problems and reward-guided generation problems that involve discrete data. We demonstrate that~\method~converges to the true posterior distribution on synthetic data, that~\method~solves discrete inverse problems accurately and efficiently on discretized image data and monophonic music data, and that~\method~excels at reward-guided sampling on guided DNA generation. Our method demonstrates strong sampling quality in all experiments. For instance, we achieve a $42\%$ higher median activity in enhancer DNA design, an $8.36\ \mathrm{dB}$ improvement of PSNR values in solving the XOR inverse problem on MNIST, and over $2\times$ smaller Hellinger distance in music infilling, compared to existing methods.

\section{Preliminaries}

\subsection{Discrete Diffusion Models}

Diffusion models~\cite{song2019generative,song2020improved,ho2020denoising,song2021scorebased, karras2022elucidating,karras2024analyzing} generate data $\rvx \in \mathcal X$ by reversing a predefined forward process. When $\mathcal X\in \mathbb R^n$ is a Euclidean space, the forward process is typically defined to be a stochastic differential equation~\cite{karras2022elucidating}:
$
    \mathrm d\rvx_t = \sqrt{2\dot\sigma_t\sigma_t} \mathrm d\rvw_t,
$
where $\rvw_t$ is a standard Wiener process and $\{\sigma_t\}_{t=0}^T$ is a predefined noise schedule. This diffusion process is reversible once the score function $s(\rvx_t, t) := \nabla_{\rvx_t} \log p(\rvx_t;\sigma_t)$ is learned. Starting from $\rvx_T \sim \mathcal N(0,\sigma_T^2 \mI)$, we generate data following the reverse SDE:
\begin{equation}
    \mathrm d\rvx_t = - 2\dot\sigma_t \sigma_t \nabla_{\rvx_t} \log p(\rvx_t; \sigma_t) \mathrm dt + \sqrt{2\dot\sigma_t\sigma_t} \mathrm d\rvw_t. \label{eq:reverse_sde}
\end{equation}

Recent works on discrete diffusion models~\cite{sohl2015deep,austin2021structured,campbell2022continuous,lou2024discrete} extend score-based generative methods from modeling continuous distributions in Euclidean spaces to categorical distributions in discrete-state spaces.
Specifically, when the data distribution lies in a finite support $\mathcal X = \{1,\dots, N\}$, one can evolve a family of categorical distributions $p_t$ over $\mathcal X$ following a continuous-time  Markov chain over the discrete elements,
\vspace{-5pt}
\begin{equation}
    \partial_t p_t = \mQ_t^{\mathsf{fw}} p_t, \label{eq:ctmc}
\end{equation}
where $p_0= p_{\text{data}}$ and $\mQ^{\mathsf{fw}}_t\in \mathbb R^{N\times N}$ are diffusion matrices with a simple stationary distribution. To reverse this continuous-time Markov chain, it suffices to learn the \textbf{concrete score} $s(\rvx; t):= [\frac{p_t(\tilde \rvx)}{p_t(\rvx)}]_{\tilde \rvx\neq \rvx}$, as the reverse process is given by
\vspace{-5pt}
\begin{equation}
    \partial_t p_{T-t} = \mQ_{t} p_{T-t}
\end{equation}

\vspace{-3pt}
with $\mQ_{t}^{[i,j]} = \frac{p_{T-t}(\rvx_i)}{p_{T-t}(\rvx_j)}\mQ_{T-t}^{\mathsf{fw}[j,i]}$ and $\mQ_{t}^{[i,i]} = -\sum_{\rvx_j\neq \rvx_i}\mQ_{t}^{[j,i]}$.

For sequential data $\rvx \in \mathcal X^D$, there are $N^D$ states in total. Instead of constructing an exponentially large diffusion matrix, we use a sparse matrix $\mQ^{\mathsf{fw}}_t$ that perturbs tokens independently in each dimension~\cite{lou2024discrete}.

\textbf{Example: uniform kernel.} An example of such diffusion matrices is 
\begin{equation}
    \mQ_t^\mathsf{fw} = \dot \sigma_t \mQ^{\mathrm{uniform}} = \dot \sigma_t( \mathbf 1\mathbf 1^T/N - \mI),
\end{equation}
where $\sigma_t$ is a predefined noise schedule with $\sigma_0 = 0$ and $\sigma_T = \sigma_{\max}$. This uniform kernel transfers any distribution to a uniform distribution as $\sigma\to \infty$. Moreover, 
\begin{align}
    p_t &= \exp\left(\int_0^t \mQ_\tau^{\mathsf{fw}} \mathrm d\tau\right)p_0 = \exp(\sigma_t \mQ^{\mathrm{uniform}}) p_0 = \Big(e^{-\sigma_t} \mI + (1 - e^{-\sigma_t}) \frac{\mathbf 1\mathbf 1^T}{N}\Big)p_0.
\end{align}
When $\rvx \in \mathcal X^D$ is a sequence of length $D$, we have $p_t(\rvx_t|\rvx_0) \prop \beta_t^{d(\rvx_t,\rvx_0)} (1-\beta_t)^{D-d(\rvx_t,\rvx_0)}$, where
$d(\cdot,\cdot)$ is the Hamming distance between two sequences, and $\beta_t = \frac{N-1}{N}(1 - e^{-\sigma_t})$.

\subsection{Diffusion Posterior Sampling Methods}

Diffusion posterior sampling is a class of methods that draw samples from $p(\rvx|\rvy)\prop p(\rvx)p(\rvy|\rvx)$ using a diffusion model trained on the prior $p(\rvx)$. This technique has been applied in two settings with slightly different formulations: solving inverse problems and reward guided generation. 

The goal of inverse problems is to reconstruct the underlying data $\rvx_0$ from its measurements, which are modeled as $\rvy = \mG(\rvx_0) + \rvn$, where $\mG$ is the forward model, and $\rvn$ denotes the measurement noise. In this case, the likelihood function is given by $p(\rvy|\rvx) = p_{\rvn}(\rvy - \mG(\rvx))$, where $p_\rvn$ denotes the noise distribution. 

On the other hand, reward-guided generation~\cite{pmlr-v202-song23k,huang2024symbolic,nisonoff2024unlockingguidancediscretestatespace} focuses on generating samples that achieve higher rewards according to a given reward function $r(\cdot)$. Here, the likelihood is modeled as $p(\rvy|\rvx) \prop \exp(\beta r(\rvx))$, where $\beta>0$ is a parameter that controls the strength of reward weighting.

Existing diffusion posterior sampling methods leverage (continuous) diffusion models as plug-and-play priors and modify~\cref{eq:reverse_sde} to generate samples from the posterior distribution $p(\rvx|\rvy)\prop p(\rvx)p(\rvy|\rvx)$. These methods can be broadly classified into four categories~\cite{daras2024surveydiffusionmodelsinverse,anonymous2024inversebench}.

    \textbf{Guidance-based methods}~\cite{chung2023diffusion,pmlr-v202-song23k,song2023pseudoinverseguided,kawar2022denoising,wang2023zeroshot,zheng2024ensemblekalmandiffusionguidance} perform posterior sampling by adding an additional likelihood score term $\nabla_{\rvx_t} \log p(\rvy|\rvx_t)$ to the score function in~\cref{eq:reverse_sde}, and modifies the reverse SDE as:
    $
        \mathrm d \rvx_t = - 2\dot\sigma_t \sigma_t (\nabla_{\rvx_t} \log p(\rvx_t; \sigma_t) + \nabla_{\rvx_t}\log p(\rvy|\rvx_t))\mathrm dt + \sqrt{2\dot\sigma_t\sigma_t} \mathrm d\rvw_t. $
    The intractable likelihood score term is approximated with various assumptions. However, these methods cannot be directly applied to discrete posterior sampling problems, where $\log p(\rvy|\rvx)$ is defined only on a finite support and thus does not have such gradient information.
    
    \textbf{Sequential Monte Carlo} (SMC) methods sample batches of particles iteratively from a series of distributions, which converge to the posterior distribution in the limit. This class of methods~\cite{trippe2022diffusion,dou2024diffusion,lee2025debiasing,wu2024practical,singhal2025generalframeworkinferencetimescaling,skreta2025feynman,chen2025solvinginverseproblemsdiffusionbased} exploits the sequential diffusion sampling process to sample from the posterior distribution. Similar ideas have been applied in~\citet{li2024derivativefree} for reward-guided generation using discrete diffusion models, which rely on approximating a soft value function at each iteration. 

    \textbf{Variational methods}~\cite{mardani2023variational,feng2023score} solve inverse problems by approximating the posterior distribution with a parameterized distribution $q_\vtheta$ and optimizing its Kullback-Leibler (KL) divergence with respect to the posterior. However, how to extend these methods to discrete diffusion models remains largely unexplored.
    
    \textbf{Variable splitting} methods~\cite{wu2024principled,zhang2024improvingdiffusioninverseproblem,zhu2023denoising,song2024solving,xu2024provably} decompose posterior sampling into two simpler alternating steps. A representative of this category is the split Gibbs sampler (see \cref{sec:method}), where the variable $\rvx_t$ is split into two random variables that both eventually converge to the posterior distribution. These methods are primarily designed for continuous diffusion models with Gaussian kernels, leveraging Langevin dynamics or optimization in Euclidean space as subroutines. \citet{murata2024g2d2gradientguideddiscretediffusion} extends this category to VQ-diffusion models in the discrete latent space using the Gumbel-softmax reparameterization trick. However, this approach is limited to discrete latent spaces with an underlying continuous embedding due to its reliance on softmax dequantization.
    In this work, we generalize the split Gibbs sampler to posterior sampling in categorical data by further exploiting the unique properties of discrete diffusion models.

\begin{figure*}[t]
\centering
  \includegraphics[width=.9\textwidth]{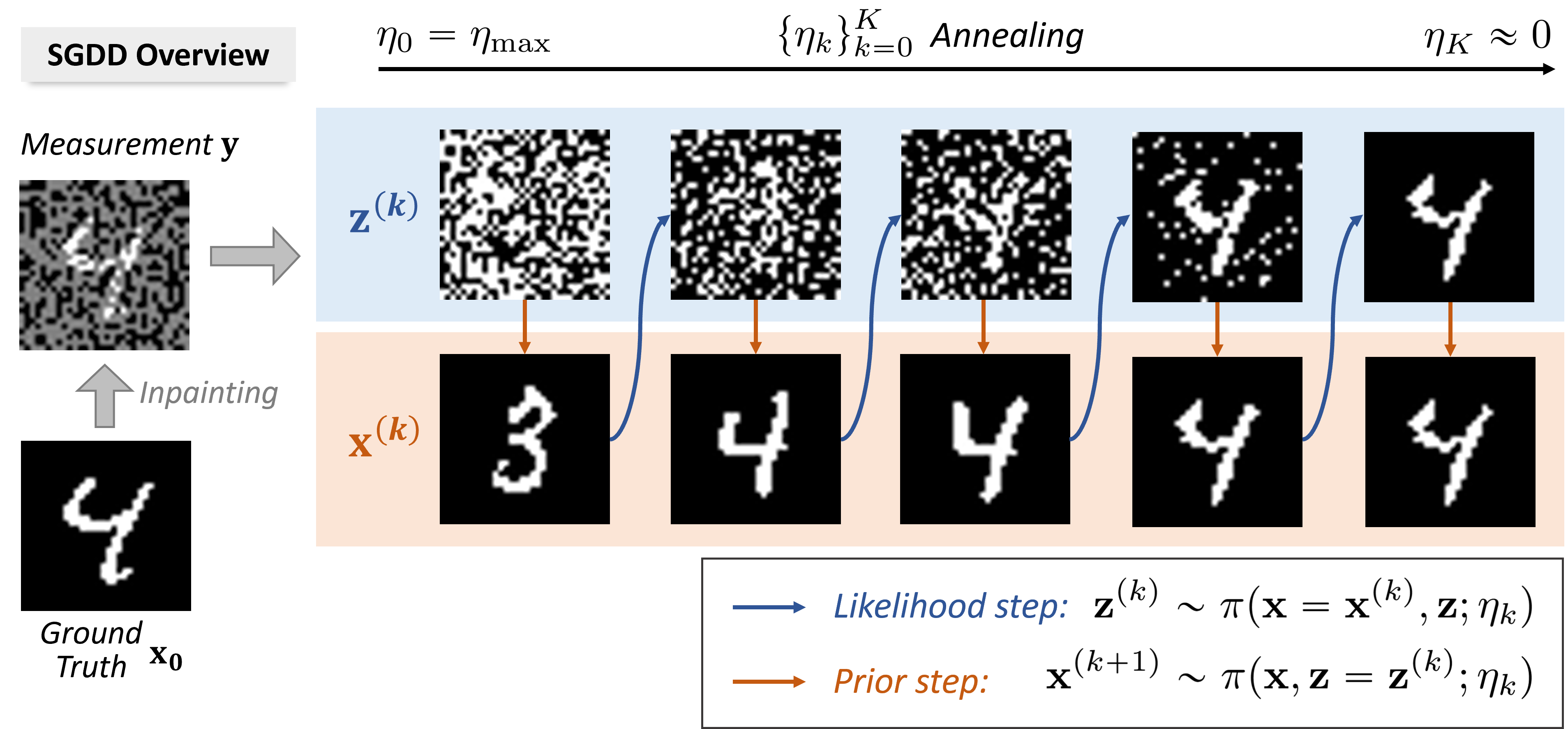}
  \caption{\textbf{An illustration of our method on inpainting discretized MNIST.} Our method implements the Split Gibbs sampler, which alternates between likelihood sampling steps and prior sampling steps. Likelihood sampling steps enforce data consistency with regard to measurement $\rvy$, while prior sampling steps involve denoising uniform noise from $z^{(k)}$ by a pretrained discrete diffusion model starting from noise level $\eta_k$. Two variables converge to a sample from the posterior distribution as the noise level $\eta_k$ reduces to zero.}
  \vspace{-10pt}
  \label{fig:illustrate}
\end{figure*}

\section{Method}
\label{sec:method}

\subsection{Split Gibbs for plug-and-play posterior sampling}

Suppose we observe $\rvy$ from a likelihood function $p(\rvy|\rvx)$ and assume that $\rvx$ satisfies a prior distribution $p(\rvx)$. Our goal is to sample from the posterior distribution:
\begin{equation}
    p(\rvx|\rvy) \prop p(\rvy|\rvx) p(\rvx) = \exp(-f(\rvx;\rvy) - g(\rvx)),
\end{equation}
where $f(\rvx;\rvy) = -\log p(\rvy|\rvx)$ and $g(\rvx) = -\log p(\rvx)$. The challenge in sampling from this distribution arises from the interplay between the prior distribution, modeled by a diffusion model, and the likelihood function. 

The Split Gibbs sampler (SGS) relaxes this sampling problem by introducing an auxiliary variable $\rvz$, allowing sampling from an augmented distribution:
\begin{equation}
    \pi(\rvx, \rvz;\eta) \prop \exp (-f(\rvz;\rvy) - g(\rvx) - D(\rvx,\rvz; \eta)), \label{eq:split_distribution}
\end{equation}

where $D(\rvx, \rvz; \eta)$ measures the distance between $\rvx$ and $\rvz$, and $\eta>0$ is a parameter that controls the strength of regularization. While prior works~\cite{coeurdoux2023plug,wu2024principled,xu2024provably} consider $D(\rvx,\rvz;\eta) = \frac{\|\rvx-\rvz\|^2_2}{2\eta^2}$, the potential function can be generalized to any continuous function $D$, such that $D(\rvx, \rvz;\eta) \to \infty$ as $\eta \to 0$ for any $\rvx \neq \rvz$. As shown in~\cref{appendix:sgs_converge}, this ensures that both marginal distributions,
\begin{equation*}
    \pi^X(\rvx;\eta):= \int\pi(\rvx,\rvz;\eta)\mathrm d\rvz,\ \ \text{and}\ \  \pi^Z(\rvz;\eta):= \int\pi(\rvx, \rvz;\eta) \mathrm d\rvx
\end{equation*} 
converges to the posterior distribution $p(\rvx|\rvy)$. 
The decoupling of the prior and likelihood in \cref{eq:split_distribution} enables split Gibbs sampling, which alternates between two steps:

1. \textbf{Likelihood sampling step:}
    \begin{equation}
    \rvz^{(k)} \sim \pi(\rvx=\rvx^{(k)}, \rvz; \eta) \prop  \exp(-f(\rvz;\rvy) - D(\rvx^{(k)},\rvz;\eta)), \label{eq:likelihood_def}
\end{equation}

2. \textbf{Prior sampling step:}
\begin{equation}
    \rvx^{(k+1)} \sim \pi(\rvx, \rvz=\rvz^{(k)};\eta) \prop \exp(-g(\rvx) - D(\rvx, \rvz^{(k)};\eta)). \label{eq:prior_def}
\end{equation}

A key feature of split Gibbs sampling is that it does not rely on knowing gradients of the guidance term $f(\rvz;\rvy)$, which is highly desirable in our setting with discrete data.  In this way, split Gibbs is arguably the most natural and simplest framework for developing principled posterior sampling algorithms for discrete diffusion models.

Prior work has studied split Gibbs posterior sampling for continuous diffusion models~\cite{coeurdoux2023plug,wu2024principled,xu2024provably}. These methods specify the regularization potential as $D(\rvx,\rvz;\eta) = \frac{\|\rvx -\rvz\|_2^2}{2\eta^2}$, which transforms~\cref{eq:prior_def} into a Gaussian denoising problem solvable by a diffusion model. A key challenge, which we tackle, is developing a performant approach for the discrete diffusion setting that is easy to implement, and enjoys rigorous guarantees (i.e., sampling from the correct posterior distribution).

\subsection{Prior Step with Discrete Diffusion Models}
\label{subsec:prior}

Suppose $p(\rvx)$ is a discrete-state distribution over $\mathcal X^{D}$ modeled by a diffusion process. We consider a discrete diffusion model with uniform transition kernel $\mQ_t^{\mathsf{fw}} = \frac{1}{N}\mathbf 1\mathbf 1^T - \mI$. To connect split Gibbs sampling to discrete diffusion models, we specify the potential function $D(\rvx,\rvz;\eta)$ as:

\vspace{-8pt}
\begin{equation}
    D(\rvx, \rvz; \eta):= d(\rvx,\rvz) \log \frac{1 + (N-1) e^{-\eta}}{(N-1) (1 - e^{-\eta})} \label{eq:potential_def}
\end{equation}
where $d(\rvx,\rvz)$ denotes the Hamming distance between $\rvx$ and $\rvz$. When $\eta\to 0^+$, the regularization potential $D(\rvx, \rvz;\eta)$ goes to infinity unless $d(\rvx,\rvz)=0$, ensuring the convergence of marginal distributions to $p(\rvx|\rvy)$.
Given~\cref{eq:potential_def}, the prior step from \cref{eq:prior_def} can be written as
\begin{align}
    \rvx^{(k+1)} &\sim \pi(\rvx, \rvz=\rvz^{(k)};\eta) 
    \prop p_0(\rvx) \left(\tilde \beta/(1-\tilde \beta)\right)^{d(\rvz^{(k)},\rvx)}
    \label{eq:prior_step_uniform}
\end{align}
where $\tilde \beta = \frac{N-1}{N}(1-e^{-\eta})$.
On the other hand, the distribution of clean data $\rvx_0$ conditioned on $\rvx_t$ is given by:
\begin{align}
    p(\rvx_0|\rvx_t) & \prop p_0(\rvx_0) p(\rvx_t|\rvx_0) \prop p_0(\rvx_0) \beta_t^{d(\rvx_t,\rvx_0)} (1-\beta_t)^{D-d(\rvx_t,\rvx_0)}
\end{align}
where $\beta_t = \frac{N-1}{N}(1-e^{-\sigma_t})$. Thus, sampling from \cref{eq:prior_step_uniform} is equivalent to unconditional generation from $p(\rvx_0|\rvx_t)$ when $\tilde \beta = \beta_t$, i.e., $\eta = \sigma_t$. Therefore, we can solve the prior sampling problem by simulating a partial discrete diffusion sampler starting from $\sigma_t = \eta$ and $\rvx_t = \rvz^{(k)}$.

\subsection{Likelihood Sampling Step}
\label{subsec:likelihood}

With the potential function $D(\rvx,\rvz;\eta)$ specified in~\cref{subsec:prior}, at iteration $k$, the likelihood sampling step from \cref{eq:likelihood_def} can be written as:

\vspace{-5pt
}
\begin{equation}
    \rvz^{(k)} \sim \pi(\rvx=\rvx^{(k)}, \rvz; \eta)  \prop \exp\left(-f(\rvz;\rvy) - d(\rvx, \rvz) \log \frac{1+(N-1) e^{-\eta}}{(N-1)(1-e^{-\eta})}\right). \label{eq:likelihood_uniform}
\end{equation}
Since the unnormalized probability density function of $\pi(\rvx=\rvx^{(k)},\rvz;\eta)$ is accessible, we can efficiently sample from \cref{eq:likelihood_uniform} using Markov Chain Monte Carlo methods.

\begin{algorithm}[t]
\caption{\textbf{S}plit \textbf{G}ibbs \textbf{D}iscrete \textbf{D}iffusion Posterior Sampling (\method)}
\label{alg1}
    \begin{algorithmic}
        \STATE {\textbf{Require:} Concrete score model $\rvs_\theta$, measurement $\rvy$, noise schedule $\{\eta_k\}_{k=0}^{K-1}$.}

        \STATE{Initialize $\rvx_0 \in \mathcal X^d$}
        \FOR{$k = 0,\dots, K-1$}
        \STATE{Likelihood step following~\cref{eq:likelihood_uniform}}:
            $\rvz^{(k)} \sim \pi(\rvx = \rvx^{(k)}, \rvz; \eta_k).$
        \STATE{Prior step following~\cref{eq:prior_step_uniform}}: $
            \rvx^{(k+1)} \sim \pi(\rvx, \rvz=\rvz^{(k)}; \eta_k)$
        \ENDFOR
        \STATE {\textbf{Return} $\rvx^{(K)}$}
    \end{algorithmic}
\end{algorithm}

\subsection{Overall Algorithm}

We now summarize the complete~\method~algorithm. Like in~\citet{wu2024principled}, our algorithm alternates between likelihood steps and prior steps while employing an annealing schedule $\{\eta_k\}$, which starts at a large $\eta_0$ and gradually decays to $\eta_K \to 0$. This annealing scheme accelerates the mixing time of the Markov chain, which we further provide some theoretical intuition in~\cref{appendix:schedule}; and ensures the convergence of $\pi^X(\rvx;\eta)$ and $\pi^Z(\rvz;\eta)$ to $p(\rvx|\rvy)$ as $\eta \to 0$.
We present the complete pseudocode of our method in~\cref{alg1} and further explain the choice of the noise schedule in~\cref{appendix:schedule}.

\vspace{-5pt}
\subsection{Convergence of Split Gibbs Diffusion Posterior Sampling}

We provide theoretical guarantees on the convergence of \method. For two probability measures in a finite $\mathcal X$, we consider the \textit{Kullback-Leibler (KL) divergence} and the \textit{relative Fisher information (a.k.a. Fisher divergence)}, respectively, as (where we define $f := \mu/\pi$)
{\small
\begin{equation*}
    \mathsf{KL}(\mu\|\pi) := \mathbb E_{\rvx\sim \mu} [\ln \frac{\mu}{\pi}(\rvx)],\   \mathsf{FI}_{\mQ} (\mu\|\pi) :=  \sum_{\rvx_i,\rvx_j\in \mathcal X} \pi(\rvx_i) \mQ^{[j,i]} \left(f(\rvx_j) - f(\rvx_i)- f(\rvx_i)\log \frac{f(\rvx_j)}{f(\rvx_i)} \right).
\end{equation*}
}\noindent
Both divergences are nonnegative and equal to zero if and only if $\mu=\pi$, when $\mQ$ is irreducible. Fisher information has been widely used to analyze the convergence of sampling algorithms in continuous spaces due to its inherent connection to KL divergence ~\cite{balasubramanian2022theory,sun2024provableprobabilisticimagingusing,wu2024principled}. In the continuous case, the Fisher information can be written as a quadratic form $\mathsf{FI}(\mu\|\pi) = \mathbb E_{\mu} \|\nabla \log (\mu/\pi)\|^2$.
However, it does not have an analogous form in finite spaces, which poses additional challenges to the analysis of \method. To address this challenge, we adopt a generalized definition of Fisher information~\cite{bobkov2006modified,Hilder2020} and analyze the convergence of \method~to the stationary distribution using a more general technique that encompasses both continuous and discrete settings.

We define a distribution $\mu_t$ over $\mathcal X$ that evolves according to likelihood steps and prior steps alternatively. We assume each likelihood step is implemented with the Metropolis-Hastings algorithm, and that each prior step is solved by the Euler method with an approximated score function. We compare $\mu_t$ to the continuous-time stationary distribution $\pi_t$, which alternates between $\pi^{X}$ and $\pi^{Z}$. 

\begin{theorem}
\label{theorem:main}
    Consider running $K$ iterations of \method~with a fixed $\eta>0$ and an estimated concrete score $s_\theta(\rvx;t)$, and suppose that each prior step is solved by an $H$ step Euler method. Let $t^*>0$ with $\sigma(t^*) = \eta$. Define $\pi_t$ and $\mu_t$ as stationary and non-stationary distributions. Over $K$ iterations of \method, the average relative Fisher information between $\mu_t$ and $\pi_t$ has

    \vspace{-8pt}
    \begin{equation}
     \frac{1}{K}\sum_{k=0}^{K-1}\underbrace{\frac{1}{t^*}\int_{k(t^*+1)+1}^{(k+1) (t^*+1)}\mathsf{FI}_{\tilde\mQ_t} (\pi_t\|\mu_t)\mathrm dt}_{\textit{average Fisher divergence in the k-th prior step}} \leq \underbrace{\frac{2\mathsf{KL}(\pi_0\|\mu_0)}{Kt^*}}_{\textit{convergence by}\ O(1/K)} + \underbrace{\frac{4M\epsilon}{c}}_{\textit{score error}} +\underbrace{\frac{2MLt^*}{cH}}_{\textit{discretization error}}.
\end{equation}
where $\left\|\frac{s_\theta(\cdot;t) - s(\cdot;t)}{s(\cdot;t)}\right\|_\infty \leq \epsilon \leq 1$, and  $L,M, c$ are positive constants defined in Appendix~\ref{appendix:theory}. 
\end{theorem}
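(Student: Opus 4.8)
The plan is to treat the alternating sequence of likelihood and prior steps as a single continuous-time interpolating process on a time axis of blocks of length $t^*+1$, where one unit of time is allotted to the likelihood (Metropolis–Hastings) step and $t^*$ units to the prior step (the partial discrete-diffusion sampler run from noise level $\sigma(t^*)=\eta$ down to $0$). The key structural fact I would use is a \emph{de Bruijn-type identity} in the discrete setting: along a continuous-time Markov chain with generator $\tilde\mQ_t$ whose stationary distribution is $\pi_t$, the KL divergence $\mathsf{KL}(\pi_t\|\mu_t)$ between the stationary target and the evolving law dissipates at a rate controlled by the generalized relative Fisher information $\mathsf{FI}_{\tilde\mQ_t}(\pi_t\|\mu_t)$. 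Concretely, I would establish that $\frac{d}{dt}\mathsf{KL}(\pi_t\|\mu_t) \le -c\,\mathsf{FI}_{\tilde\mQ_t}(\pi_t\|\mu_t) + (\text{error terms})$, where $c$ is the constant controlling how the true score lower-bounds transition rates, and the error terms come from (i) using the approximate score $s_\theta$ in place of $s$ and (ii) the $H$-step Euler discretization of the reverse chain. The score error enters multiplicatively: since $\|(s_\theta-s)/s\|_\infty\le\epsilon$, the perturbed generator differs from the ideal one by a factor within $[1-\epsilon,1+\epsilon]$ on each off-diagonal entry, which after a Grönwall-type bookkeeping contributes a term of order $M\epsilon/c$ per block; the discretization error similarly contributes a term of order $MLt^*/(cH)$, with $L$ a Lipschitz-type modulus of the score in $t$ and $M$ a bound on the relevant second-order functional.

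Next I would handle the likelihood (MH) steps: these leave $\pi^Z$ exactly invariant and, being a valid Metropolis chain, are nonexpansive in KL, so $\mathsf{KL}(\pi_{k(t^*+1)+1}\|\mu_{k(t^*+1)+1}) \le \mathsf{KL}(\pi_{k(t^*+1)}\|\mu_{k(t^*+1)})$ — i.e., the transition across the one unit of "likelihood time" does not increase the divergence and hence only helps. Combining this monotonicity across likelihood steps with the dissipation inequality integrated over each prior-step window $[k(t^*+1)+1,\,(k+1)(t^*+1)]$ gives a telescoping bound: summing the per-block inequalities, the KL terms collapse, leaving $c\sum_{k=0}^{K-1}\int_{\text{$k$-th prior window}}\mathsf{FI}_{\tilde\mQ_t}(\pi_t\|\mu_t)\,dt \le 2\,\mathsf{KL}(\pi_0\|\mu_0) + Kt^*\big(\tfrac{4M\epsilon}{c}\cdot c + \dots\big)$ — more precisely, after dividing by $Kt^*c$ and matching constants, one obtains exactly the three terms in the statement: $\tfrac{2\mathsf{KL}(\pi_0\|\mu_0)}{Kt^*}$ from the telescoped boundary term, $\tfrac{4M\epsilon}{c}$ from accumulated score error, and $\tfrac{2MLt^*}{cH}$ from accumulated discretization error. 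The factor $1/t^*$ in front of each integral in the theorem is just the normalization that turns the time-integral over a prior window into an average.

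The main obstacle I expect is establishing the discrete de Bruijn identity / dissipation inequality with the \emph{right} constant $c$ and in a form robust to the score perturbation — in continuous space this is the standard "entropy dissipation = Fisher information" computation, but in the finite-state CTMC setting one must work with the generalized Fisher information of \cite{bobkov2006modified,Hilder2020}, verify that the chosen generator $\tilde\mQ_t$ (built from the concrete score via the reverse-rate formula $\mQ_t^{[i,j]}=\tfrac{p_{T-t}(\rvx_i)}{p_{T-t}(\rvx_j)}\mQ^{\mathsf{fw}[j,i]}_{T-t}$) actually has $\pi_t$ as its instantaneous stationary law, and carefully track how replacing $s$ by $s_\theta$ changes both the generator and its stationary distribution. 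A secondary technical point is controlling the Euler discretization: one must show that freezing the rates over each of the $H$ subintervals perturbs the dissipation by at most $O(Lt^*/H)$ uniformly, which requires a Lipschitz bound on $t\mapsto s(\cdot;t)$ (hence the constant $L$) and a uniform bound $M$ on quantities like $\sup\|f\|_\infty$ or the relevant moments of $\pi_t/\mu_t$ along the trajectory. Once these two ingredients — the dissipation inequality with explicit constants and the per-step error control — are in place, the final assembly is a routine telescoping sum and division by $Kt^*$.
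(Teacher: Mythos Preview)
Your overall architecture---a KL-dissipation inequality over each prior window, data-processing (KL nonexpansiveness) across the Metropolis--Hastings likelihood steps, then telescoping and dividing by $Kt^*$---is exactly what the paper does. Two details in your sketch deviate from how the paper actually closes the argument. First, the error term is not handled by ``Gr\"onwall-type bookkeeping'': the paper computes $\partial_t\mathsf{KL}(\pi_t\|\mu_t) = -\mathsf{FI}_{\tilde\mQ_t}(\pi_t\|\mu_t) - \langle\log\tfrac{\pi_t}{\mu_t},(\tilde\mQ_t-\mQ_t)\pi_t\rangle$ exactly, and then bounds the cross term using the \emph{Lagrangian variational characterization} $\mathcal L_{\tilde\mQ}(\pi,\partial_t\pi)=\sup_\varphi[\langle\varphi,\partial_t\pi\rangle-\sum\pi(\rvx_i)\tilde\mQ^{[j,i]}e^{\varphi(\rvx_j)-\varphi(\rvx_i)}]$ from the FIR inequality of Hilder, plugging in $\varphi=\lambda\log(\pi_t/\mu_t)$ with a rescaling $\lambda=c/2$; this is where the constant $c$ enters, and it depends on Assumption~(iv), the uniform bound $B$ on $\|\log(\mu_t/\pi_t)\|_\infty$, \emph{not} on a lower bound for the score or transition rates as you suggest. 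Second, the score error and the discretization error are handled \emph{jointly} by the single bound $\mathcal L_{\mQ_t}(\mu_t,\tilde\mQ_{h\delta}\mu_t)\le M(\epsilon+Ls)$ (combining Assumptions~(i) and~(ii)), which after integration over each Euler subinterval of length $\delta=t^*/H$ and summation over $h$ yields the $\tfrac{4M\epsilon}{c}+\tfrac{2MLt^*}{cH}$ terms directly. With these two specifics in place your plan goes through unchanged.
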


We include the proof in~\cref{appendix:theory}. Theorem~\ref{theorem:main} states that the average Fisher divergence of the non-stationary process to the stationary process in all prior steps converges at the rate of $O(1/K)$, up to a constant error term. 
This result extends our theoretical understanding of diffusion posterior sampling by generalizing the analysis~\cite{sun2024provableprobabilisticimagingusing,wu2024principled} on SDEs to general Markov processes using the free-energy-rate-functional-relative-Fisher-information (FIR) inequality~\cite{hilder2017fir}. Moreover, compared to existing analyses for continuous diffusion models, our analysis accounts not only for the imperfect score function but also for the discretization error in solving CTMCs as well. 

Unlike some previous methods~\cite{li2024derivativefree,wu2024practical} that rely on assumptions of surrogate value functions and an infinite number of particles, \method~does not rely on such approximations and eventually converges to the posterior distribution with robustness to score approximation and discretization error.  Consequently,~\method~can be viewed as a natural and principled approach for posterior sampling using discrete diffusion models, leading to empirical benefits as we demonstrate in~\cref{sec:experiments}.

\section{Experiments}
\label{sec:experiments}
\begin{table}[b]
    \centering
    \vspace{-10pt}
    \caption{A summary of posterior sampling tasks considered in our experiments.}
    \label{tab:tasks}
    {\small
    \begin{tabular}{lllll}
    \toprule
       Domain  &  Synthetic & DNA enhancers & Images & Monophonic music \\
       \midrule
       Task type  & Inverse problem & Reward guidance & Inverse problem & Inverse problem\\
       Dimensionality & $50^{10}$ & $4^{200}$ & $2^{1024}$ & $129^{256}$\\
       \bottomrule
    \end{tabular}
}
\vspace{-10pt}
\end{table}

We conduct experiments to demonstrate the effectiveness of our algorithm on various posterior sampling tasks in discrete-state spaces, including discrete inverse problems and conditional generation guided by a reward function.

\subsection{Experimental Setup}

We evaluate our method across multiple domains, including synthetic data, DNA sequences, discrete image prior, and monophonic music, as listed in~\cref{tab:tasks}. We define the likelihood function for all inverse problems as $p(\rvy|\rvx) \prop \exp(-\|\mG(\rvx)-\rvy\|/\sigma_\rvy)$, while the likelihood functions for reward guiding tasks are $p(\rvy|\rvx) \prop \exp(\beta r(\rvx))$.

\textbf{Pretrained models.} For synthetic data, we use a closed-form concrete score function. For real datasets, we train a SEDD~\cite{lou2024discrete} model with the uniform transition kernel (details in~\cref{appendix:exp_details}).

\textbf{Baselines.} We compare our methods to existing approaches for discrete diffusion posterior sampling: DPS~\cite{chung2023diffusion}, SVDD-PM~\cite{li2024derivativefree}, and SMC~\cite{wu2024practical}.
The original version of DPS relies on gradient back-propagation through both the likelihood function and the pretrained diffusion model, making it unsuitable for discrete-domain posterior sampling. In our experiments, we consider its discrete analogy by replacing the gradient with a guidance rate matrix.
SVDD-PM is originally designed to sample from a reward-guided distribution $p^\beta(\rvx) \prop p(\rvx)\exp(\beta r(\rvx))$, but it is also adaptable to solving inverse problems.
More implementation details are elaborated in~\cref{appendix:baseline}.

\textbf{\method~implementation details.} We implement our method with a total of $K$ iterations. In each prior sampling step, we simulate the reverse continuous-time Markov chain with $H=20$ steps. 
Additional details on hyperparameter choices are provided in~\cref{appendix:hyper}.

\begin{table}[t]
    \centering
    \vspace{-15pt}
    \caption{\textbf{Quantitative evaluation of posterior sampling on synthetic dataset} for sequence lengths of $D=2, 5$ and $10$. The \textit{Hellinger distance} and \textit{total variation} are computed between the empirical distribution of 10k samples and the true posterior distribution on the first two dimensions.}
    \label{tab:synthetic}
    \resizebox{0.85\linewidth}{!}{
    {\small
    \begin{tabular}{lcccccccc}
    \toprule
    & \multicolumn{2}{c}{$D=2$} && \multicolumn{2}{c}{$D=5$} && \multicolumn{2}{c}{$D=10$}  \\
    \cline{2-3} \cline{5-6} \cline{8-9} \\[-8pt]
         &  Hellinger  $\downarrow$ & TV $\downarrow$ & & Hellinger  $\downarrow$ & TV $\downarrow$ & & Hellinger  $\downarrow$ & TV $\downarrow$ \\
        \midrule
      SVDD-PM ($M=20$) & $0.297$ & $0.320$ & & $0.413$ & $0.458$ & & $0.455$ & $0.503$\\
      SVDD-PM ($M=100$) & $0.275$ & $0.292$ & &$0.401$ & $0.445$ & & $0.448$ & $0.494$\\
      SMC & $0.238$ & $0.248$ & & $0.353$ & $0.391$ & & $0.412$ & $0.460$\\
      DPS& $0.182$ & $0.176$ & & $0.345$ & $0.383$ & & $0.410$ & $0.453$\\
      \method & $\textbf{0.149}$ & $\textbf{0.125}$ & & $\textbf{0.214}$ & $\textbf{0.222}$ & & $\textbf{0.334}$ & $\textbf{0.365}$\\
    \bottomrule
    \end{tabular}
    }}
    \vspace{-10pt}
\end{table}

\subsection{Controlled Study using Synthetic Data}

To evaluate the accuracy of~\method~in posterior sampling, we first conduct a case study on a synthetic task where the true posterior distribution is accessible. Let $\rvx\in\mathcal X^D$ follow a prior distribution $p(\rvx)$, obtained by discretizing a Gaussian distribution $\mathcal N(0,\sigma^2 \mI_D)$ over an $D$-dimensional grid. We define a forward model by $\mG(\rvx) = \|f(\rvx)\|_1$ where $f(\rvx)$ maps $\rvx$ to $\mathbb R^D$.

We draw 10k independent samples with each algorithm and compute the frequency map for the first two dimensions of $\rvx$. We measure the \textit{Hellinger distance} and \textit{total variation distance} between the empirical distribution and the ground truth posterior distribution. As shown in~\cref{tab:synthetic}, our method achieves substantially more accurate posterior sampling compared to baseline methods.
We include visualization of the generated samples in Appendix~\ref{appendix:synthetic} (\cref{fig:gaussian}) by projecting samples onto the first two dimensions and computing their heatmaps. When the number of dimensions is small, e.g., $D=2$, all methods approximate the true posterior distribution well. However, as the dimensionality increases, DPS, SMC, and SVDD-PM struggle to capture the posterior and tend to degenerate to the prior distribution. While~\method~consistently outperforms baseline methods in terms of \textit{Hellinger distance} and \textit{total variation distance}, it also shows a slight deviation from the posterior distribution as the dimensionality increases.

\vspace{-5pt}
\subsection{Guided DNAs Generation}

We evaluate our method for generating regulatory DNA sequences that drive gene expression in specific cell types, which is an important task for cell and gene therapy and commonly studied in the literature~\cite{li2024derivativefree,taskiran2024cell,wang2025finetuning}.

We train a discrete diffusion model~\cite{lou2024discrete} on a publicly available dataset from~\citet{gosai2023machine} that consists of $\sim$700k DNA sequences. The expression driven by each sequence in human cell lines is measured by massively parallel reporter assays (MPRAs). Following~\citet{wang2025finetuning}, we split the dataset into two subsets and train surrogate reward oracles to predict the activity level in the HepG2 cell line. We use the oracle trained on the training set, $r(\cdot)$, to guide our diffusion model, and evaluate its performance using the other one trained on the test dataset. The likelihood function is therefore defined as $p(\rvy|\rvx) \prop \exp(\beta r(\rvx))$.


\textbf{Evaluation metrics.} We evaluate the quality of generated samples using the following metrics:

\begin{itemize}[left=5pt,parsep=0pt,topsep=0pt]
    \item \textit{Predicted activity (Pred-Activity).} We evaluate the activity level of generated sequences using the oracle trained on the test dataset. This corresponds to the reward function that we aim to optimize.

    \item \textit{Binary classification on chromatin accessibility (ATAC-Acc).} Following~\cite{wang2025finetuning}, we use a binary classifier trained on chromatin accessibility data in the HepG2 cell line. As active enhancers should have accessible chromatin, \textit{ATAC-Acc} can be used to validate the generated sequences.

    \item \textit{Log-likelihood.} We estimate the log-likelihood of the generated samples using the ELBO of a pretrained masked diffusion model~\cite{wang2025finetuning}. This regulates the generated samples to stay close to the prior distribution of natural DNA sequences.
\end{itemize}

We include more baseline methods that require additional training for this task, including a discrete diffusion fine-tuning method, DRAKES~\cite{wang2025finetuning}, classifier guidance (CG) for discrete diffusion models~\cite{nisonoff2024unlockingguidancediscretestatespace}, and twisted diffusion sampler (TDS)~\cite{wu2024practical} which uses CG as the proposal function for sequential Monte Carlo sampling.
We further include a Feynman-Kac diffusion sampler~\cite{singhal2025generalframeworkinferencetimescaling} with MAX potential function as well.

\begin{table}[t]
    \centering
    \vspace{-15pt}
    \caption{\textbf{Quantitative results for guided DNA (enhancers) sampling.} We generate 640 independent samples with different algorithms according to the reward-shaped posterior distribution. We report the mean and median of the enhancer's activity computed by the surrogate reward.}
    \resizebox{0.83\textwidth}{!}{
    {\small
    \label{tab:dna}
    \begin{tabular}{l>{\centering\arraybackslash}p{6em}>{\centering\arraybackslash}p{6em}>{\centering\arraybackslash}p{5em}>{\centering\arraybackslash}p{6em}}
    \toprule
    & Pred-Activity (median) $\uparrow$ & Pred-Activity (average) $\uparrow$ & ATAC \hspace{2em} Acc $\%$ $\uparrow$ & Log-likelihood (average) $\uparrow$\\
    \midrule
    SVDD-PM    & $5.41$ & $5.08_{\pm 1.51}$ & $49.9$ & -$\textbf{241}_{\pm 47}$\\
    SMC & $4.15$ & $3.99_{\pm 1.49}$ &$39.9$  & -$259_{\pm 13}$ \\
    CG & $2.90$ & $2.76_{\pm0 .94}$ & $\ \ 0.0$ & -$265_{\pm 5\hspace{5pt}}$\\
    TDS & $4.64$ & $4.62_{\pm 1.09}$ &$45.3$  & -$257_{\pm 9 \hspace{5pt} }$ \\
    FK-MAX & $4.90$ & $4.79_{\pm 1.02}$ & $47.1$ & $-243_{\pm 9\hspace{5pt}}$ \\
    DRAKES w/o KL & $6.44$ & $6.28_{\pm 0.76}$ &$82.5$ & -$281_{\pm 6 \hspace{5pt}}$\\
    DRAKES w/ KL & $5.61$ & $5.24_{\pm 1.39}$ &\underline{$92.5$} & -$264_{\pm 9 \hspace{5pt} }$\\
    \midrule
    \method\ $(\beta = 30)$ & \underline{$8.82$} & $\underline{8.69}_{\pm 1.12}$ & $91.7$ & -$\underline{246}_{\pm 41}$\\
    \method\ $(\beta = 50)$ &  $\textbf{9.14}$ & $\textbf{8.96}_{\pm 1.17}$ & \textbf{93.0} & -$261_{\pm 29}$\\ 
    \bottomrule
        \end{tabular}}}
\end{table}

\textbf{Results.} We generate $640$ DNA samples using different algorithms and compute the median and average \textit{Pred-Activity} of the generated sequences. As shown in~\cref{tab:dna}, our method achieves the highest \textit{Pred-Activity} (reward values) compared to existing baselines, including 42\% higher median reward compared to the previous state-of-the-art method.

Moreover, as \method~samples from the posterior distribution, $p(\rvx)\exp(\beta r(\rvx))/Z$, the guidance strength $\beta$ controls the strength of guidance imposed in the sampling algorithm. We instantiate \method~with $\beta$ ranging from $3$ to $100$ and calculate the \textit{Pred-Activity} and \textit{Log-likelihood} of the generated samples. We plot the distribution of the samples' \textit{Pred-Activity} with various $\beta$ in~\cref{fig:pareto}. In general, using a larger $\beta$ results in higher \textit{Pred-Activity} (reward), at the expense of a lower likelihood as the target distribution deviates further from the prior distribution.

\begin{figure}[t]
\vspace{-5pt}
\begin{minipage}[t]{0.5\textwidth}
    \includegraphics[width=\linewidth]{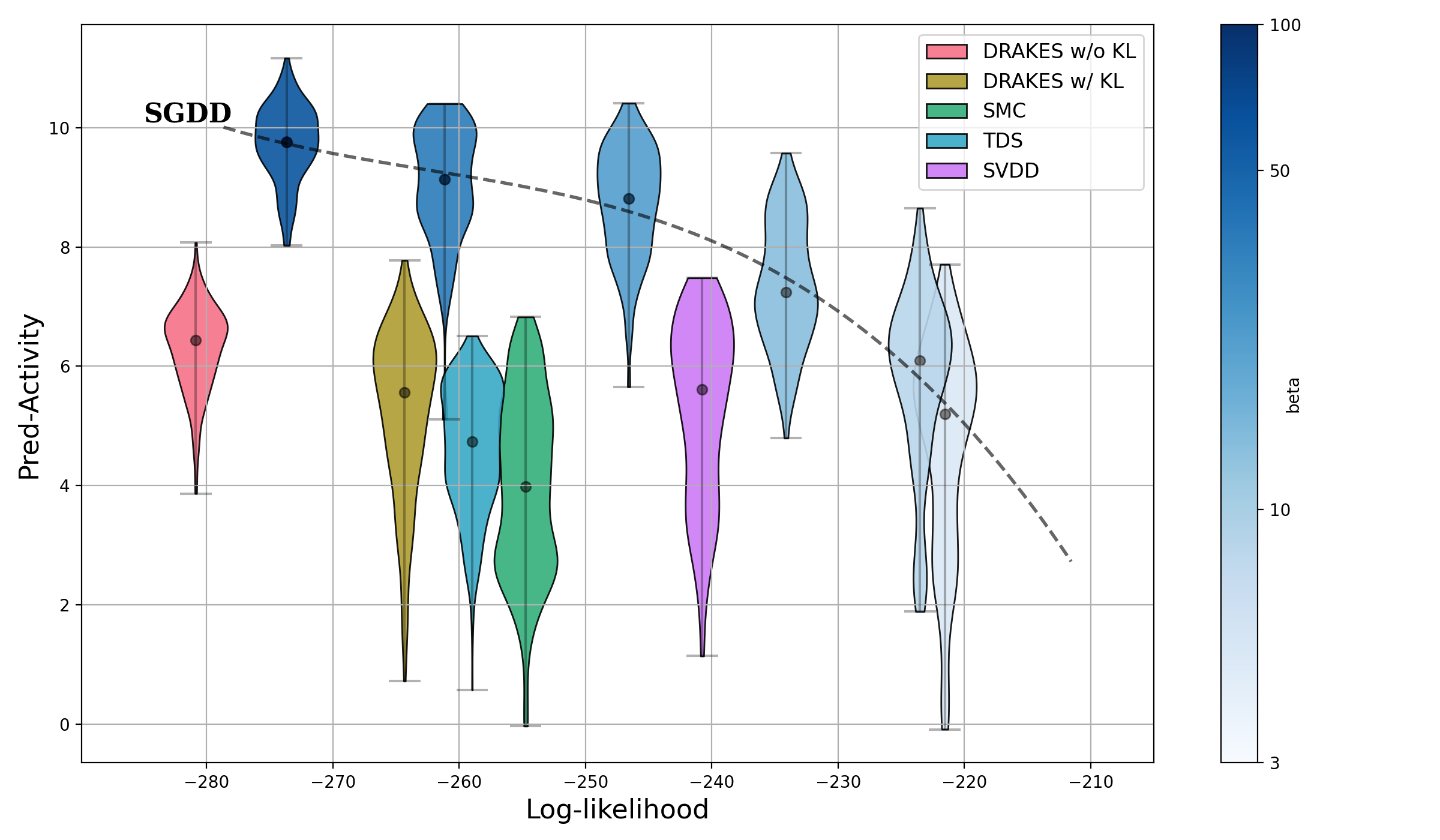}
    \caption{\textbf{Generating DNA sequences involves trade-offs} between targeted objectives and consistency with the prior distribution. \method~is plotted in blue and darker colors mean higher $\beta$.}
    \label{fig:pareto}
\end{minipage}
\hfill
\begin{minipage}[t]{0.43\textwidth}
    \centering
    \includegraphics[width=\linewidth]{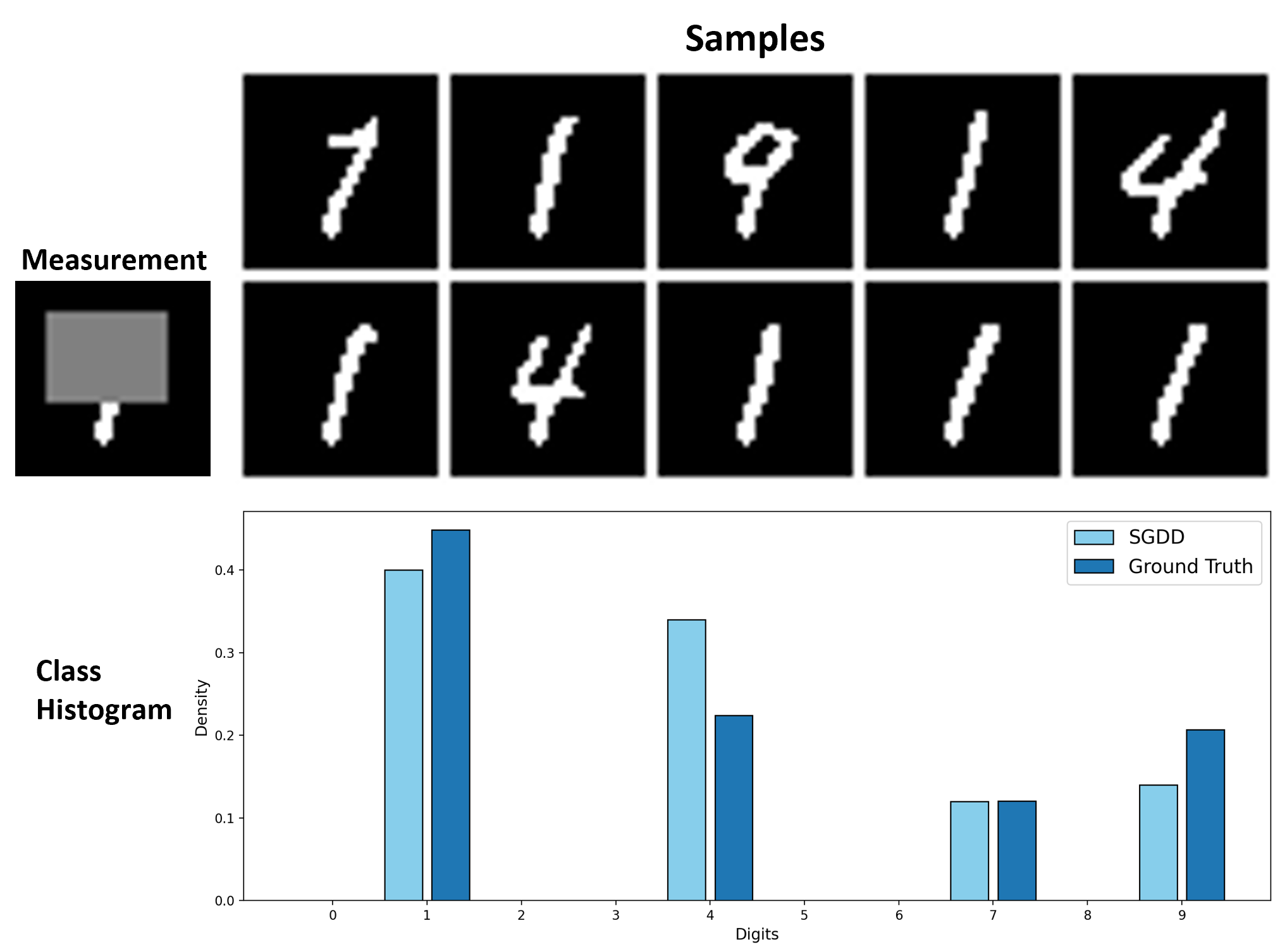}
    \caption{\textbf{Diversified samples when the measurement $\rvy$ is sparse.} Samples are generated by~\method~when solving an MNIST inpainting task.} 
    \label{fig:diversity}
\end{minipage}
\vspace{-15pt}
\end{figure}

\vspace{-5pt}
\subsection{Discrete Image Inverse Problems}
\vspace{-5pt}

\begin{table}[t]
\vspace{-10pt}
    \centering
    \caption{\textbf{Quantitative results for XOR and AND problems on discretized MNIST.} We report the mean and standard deviation of \textit{PSNR} and \textit{class accuracy} across 1k generated samples.~\method~demonstrates superior performance on both tasks.}
    \label{tab:mnist}
    \resizebox{0.85\linewidth}{!}{
    {\small
    \begin{tabular}{lccccc}
    \toprule 
     & \multicolumn{2}{c}{XOR} &  & \multicolumn{2}{c}{AND}\\
     \cline{2-3} \cline{5-6} \\[-8pt]
     & PSNR $\uparrow$ & Accuracy ($\%$) $\uparrow$ & &  PSNR $\uparrow$  & Accuracy ($\%$) $\uparrow$  \\
     \midrule
     SVDD-PM  ($M=20$)      & $11.81_{\pm 2.54}$ & $51.4$ & & $10.04_{\pm 1.49}$ & $33.7$ \\
     SMC & $10.05_{\pm 1.54}$ & $27.8$ & & $10.25_{\pm 1.63}$ & $24.4$\\
     DPS &\ \ $9.04_{\pm 1.21}$ & $30.0$ & & \ \ $8.67_{\pm 0.91}$ & $24.5$\\
    \method  & $\textbf{20.17}_{\pm 3.47}$ &  $\textbf{91.2}$ & & $\textbf{17.25}_{\pm 3.82}$ & $\textbf{79.4}$\\
    \bottomrule
    \end{tabular}
    }}
    \vspace{-10pt}
\end{table}

\textbf{Setup.} We evaluate on a discretized image domain. Specifically, we convert the MNIST dataset~\cite{lecun1998gradient} to binary strings by discretizing the images, and train a discrete diffusion prior on 60k training data.
As examples of linear and nonlinear forward models on binary strings, we consider AND and XOR operators. We randomly pick $\gamma D$ pairs of positions $(i_p, j_p)$ over $\{1,\dots, D\}$, and compute:
\begin{equation}
    \mG_{\mathrm{AND}}(\rvx) = [\rvx_{i_p} \wedge \rvx_{j_p}]_{p=1,\dots, \gamma D},\ \mG_{\mathrm{XOR}}(\rvx) = [\rvx_{i_p} \oplus \rvx_{j_p}]_{p=1,\dots, \gamma D}.
\end{equation}
These logical operators are challenging to approximate using surrogate functions to guide sampling.

\textbf{Evaluation metrics.} We use 1k binary images from the test set of MNIST and calculate the \textit{peak signal-to-noise ratio (PSNR)} of the reconstructed image. Furthermore, we train a simple convolutional neural network on MNIST as a surrogate and report the \textit{classifier accuracy} of the generated samples. 

\textbf{Results.} As shown in~\cref{tab:mnist}, \method~outperforms baseline methods by a large margin in both XOR and AND tasks. We present the samples generated by \method~for the XOR task in~\cref{appendix:mnist}. The reconstructed samples are visually consistent with the underlying ground truth signal, which explains the high class accuracy of our method shown in~\cref{tab:mnist}.

\textbf{Sample diversity.} Furthermore, we demonstrate that~\method~generates diversified samples when the measurement $\rvy$ is sparse. For example, when a digit is masked with a large box, as shown in~\cref{fig:diversity}, the measurement lacks sufficient information to fully recover the original digit. In this scenario,~\method~generate samples from multiple plausible modes, including digits 1,4,7 and 9. This highlights the ability of our method to produce diverse samples from the posterior distribution while preserving consistency with the measurement.

\textbf{Higher dimensional images.} To demonstrate the scalability of \method, we further evaluate~\method~on a higher-dimensional dataset, FFHQ 256~\cite{karras2019style} in~\cref{appendix:ffhq}, where we solve the super-resolution task by operating in the discrete latent space of a pretrained vector-quantized diffusion model.

\vspace{-5pt}
\subsection{Monophonic Music}
\begin{table}[t]
    \centering
    \caption{\textbf{Quantitative evaluation of infilling monophonic music sequences.}}
    \label{tab:monophonic}
    {\small
    \begin{tabular}{lccccc}
        \toprule
        & \multicolumn{2}{c}{$\gamma = 40\%$} & & \multicolumn{2}{c}{$\gamma=60\%$} \\
        \cline{2-3} \cline{5-6}\\[-8pt]
           & Hellinger $\downarrow$ & Outliers  $\downarrow$  & &  Hellinger $\downarrow$ & Outliers  $\downarrow$ \\
       \midrule
        SVDD-PM  & $0.465_{\pm 0.129}$ & $0.112_{\pm 0.098}$ & & $0.529_{\pm 0.137}$ & $0.114_{\pm 0.119}$\\
        SMC & $0.492_{\pm 0.143}$ & $0.152_{{\pm}0.138}$ & & $0.552_{\pm 0.131}$ & $0.164_{\pm 0.140}$\\ 
        \method & $\textbf{0.126}_{\pm 0.098}$ & $\textbf{0.003}_{\pm 0.002}$ & & $\textbf{0.244}_{\pm 0.164}$ & $\textbf{0.060}_{\pm 0.138}$ \\
        \bottomrule
    \end{tabular}}
    \vspace{-14pt}
\end{table}
We conduct experiments on monophonic songs from the Lakh pianoroll dataset~\cite{dong2018musegan}. Following the preprocessing steps in~\cite{campbell2022continuous}, we obtain monophonic musical sequences of length $D=256$ and vocabulary size $N = 129$. The ordering of musical notes is scrambled to destroy any ordinal structure. Therefore, the likelihood function for this task is $p(\rvy|\rvx) \prop \exp(-\|\mG(\rvx) - \rvy)\|_0/\sigma_{\rvy})$. This $\ell_0$ norm makes this inverse problem ill-posed, in stark contrast to most reward-guided generation tasks, where the reward functions are much smoother.
We evaluate our method on an inpainting task, where the forward model $\mG(\cdot)$ randomly masks $\gamma=40\%, 60\%$ of the notes. As in~\citet{campbell2022continuous}, we use the \textit{Hellinger distance of histograms} and the \textit{proportion of outlier notes} as metrics to evaluate our method.
We run experiments on 100 samples in the test set and report the quantitative results in~\cref{tab:monophonic}.
~\method~successfully completes monophonic music sequences in a style more consistent with the given conditions, achieving a 2x reduction in Hellinger distance compared to SVDD-PM.

\vspace{-5pt}
\subsection{More Ablation Studies}
\vspace{-5pt}
\label{sec:ablation}

We conduct more ablation studies on our method to investigate how specific design choices may affect its performance. We discuss the effectiveness of using an annealing noise schedule $\eta_k$ over a fixed noise $\eta_k \equiv \eta$ in Appendix~\ref{appendix:schedule}, including empirical discovery and theoretical justification. We include runtime analysis of \method~in Appendix~\ref{appendix:runtime}, showing the tradeoff between sample quality and computing budgets regarding NFEs. Finally, we highlight the value of using an informative discrete diffusion prior in solving inverse problems by comparing \method~to MCMC methods without a prior (Appendix~\ref{appendix:wo_prior}).

\vspace{-5pt}
\section{Conclusion}

\vspace{-5pt}
We introduce~\method, a principled discrete diffusion posterior sampling algorithm based on the split Gibbs sampler, extending plug-and-play diffusion methods from Euclidean spaces to discrete-state spaces. Our algorithm alternates between prior and likelihood sampling steps with decreasing regularization level $\eta_k$. By carefully designing the regularization potential function, we establish a connection between prior sampling steps and discrete diffusion samplers, integrating discrete diffusion models into the split Gibbs sampling framework. We prove the convergence of~\method~to the targeted distribution. We evaluate our method on diverse tasks from various domains, including solving inverse problems and sequence generation guided by reward functions, where it significantly outperforms existing baselines.

\begin{ack}
This research was supported in part by gifts from OpenAI, Two Sigma, and Cisco. Z.W. is supported by the Amazon AI4Science fellowship. W.C. is supported by the Kortschak Scholars Fellowship. 
\end{ack}

\bibliographystyle{plainnat}
\bibliography{cite}


\appendix

\newpage

\section{Theory}
\label{appendix:theory}

\textbf{Notations.} 
We consider a linear stochastic process whose forward Kolmogorov equation can be written as $\partial_t \pi_t = \mQ_t \pi_t$ with boundary condition $\pi_{t=0} = \pi_0$, where $\pi_t\sim \mathcal P(\mathcal X)$. For the simplicity of notation, we let $\mQ_t$ to generate \textit{reverse continuous-time Markov chain}, which means $\mQ_t^{[ij]} = s(\rvx_j;T-t)_{\rvx_i} \mQ_{\text{uniform}}^{[ji]}$; and time $t$ flows with the sampling algorithm, which means $\pi_0$ is a uniform distribution and $\pi_T$ is the data distribution.

For two probability mass functions $\mu$ and $\pi$, we discuss their KL divergence,
\begin{equation}
    \mathsf{KL}(\mu\|\pi) := \mathbb E_{\rvx\sim \mu} \left[\log \frac{\mu}{\pi}(\rvx)\right].
\end{equation}
We define the \textit{relative Fisher information} between $\mu$ and $\pi$ as
\begin{equation}
\label{eqn-fi-discrete}
    \mathsf{FI}_{\mQ} (\mu\|\pi) := \sum_{\rvx_i,\rvx_j\in \mathcal X} \pi(\rvx_i) \mQ^{[j,i]} \left(f(\rvx_j) - f(\rvx_i)- f(\rvx_i)\log \frac{f(\rvx_j)}{f(\rvx_i)} \right),\ f := \mu/\pi.
\end{equation}
Note that when $\mQ$ is irreducible, the relative Fisher information $\mathsf{FI}_{\mQ}(\mu\|\pi) \geq 0$, and $\mathsf{FI}_{\mQ}(\mu\|\pi) = 0$ if and only if $\mu=\pi$.

In continuous state spaces, the relative Fisher information criterion has been used to derive general first-order guarantees for non-log-concave sampling \cite{balasubramanian2022theory}. This line of analysis has been adapted to provide theoretical insights for posterior sampling methods using diffusion models \cite{sun2024provableprobabilisticimagingusing, wu2024principled}. The formula in \cref{eqn-fi-discrete} represents a discrete state space analogue of Fisher information. We refer interested readers to \cite{bobkov2006modified,diaconis1996logarithmic,Hilder2020} for more discussions on this topic and the relation between KL and Fisher information in the discrete state space. Our analysis in this paper adapts these techniques and provides general first-order guarantees for posterior sampling with discrete diffusion models.

\textbf{Time interpolation of \method.} \method~alternates between likelihood steps and prior steps. Let $t^*> 0$ such that $\sigma_{t^*} = \eta$. We define $\{\pi_t\}$ as the distributions at time $t$ of the stationary process, and $\{\mu_t\}$ as the distributions of the non-stationary process. 

\begin{itemize}
    \item In time intervals $\tau \in [k(t^*+1)+1, (k+1) (t^*+1)]$.
    The stationary distribution is initialized with $\pi_{k(t^*+1)+1}(\rvx) = \pi^Z_\eta(\rvx)$. 
    We run a prior step on $\mu_\tau$ with the learned concrete score function for $H$ steps, while $\pi_\tau$ evolves in continuous time with the true concrete score function.
    \item In time intervals $\tau \in [k(t^*+1), k(t^*+1)+1]$, we run a Metropolis-Hastings sampling algorithm on both $\pi_\tau$ and $\mu_{\tau}$.
\end{itemize}

\textbf{Assumptions.} 
Our analysis relies on the following assumptions:
\begin{itemize}
    \item [(i)] Concrete score is well estimated: $\left\|\frac{s_\theta(\cdot;t) -s(\cdot;t)}{s(\cdot;t)}\right\|_{\infty} \leq \epsilon < 1$. 
    \item [(ii)] Smoothness of score function in $t$: $\|\frac{s(\cdot;t+\Delta t) - s({\cdot};t)}{s(\cdot;t)}\|_\infty \leq L \cdot \Delta t.$
    \item [(iii)] Strong irreducibility: $\mQ_t^{[i,j]} > 0$ for $i \neq j$.
    \item [(iv)] Bounded probability ratio: $\sup_{t}\|\log \frac{\mu_t(\rvx)}{\pi_t(\rvx)}\|_\infty \leq B$.
    \item [(v)] The entry-wise absolute of the reverse-time transition matrix is bounded: $\sup_{t}\| |\mQ_t|\|_1 \leq M$.
\end{itemize}

\subsection{Lemmas}

\begin{lemma}[Data processing inequality of Metropolis Hasting]
\label{lemma:dpi}
    Running Metropolis Hasting on two distributions $\pi_\tau$ and $\mu_\tau$ does not increase their KL divergence, i.e.,
    \begin{equation}
        \mathsf{KL}( \pi_{k(t^*+1)} \|\mu_{k(t^*+1)}) \geq \mathsf{KL}(\pi_{k(t^*+1)+1}\| \mu_{k(t^*+1)+1} ).
    \end{equation}
\end{lemma}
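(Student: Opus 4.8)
\textbf{Proof proposal for Lemma~\ref{lemma:dpi}.}

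The plan is to recognize that a single Metropolis--Hastings update is an application of a fixed Markov transition kernel, and then invoke the data processing inequality for $f$-divergences (here, KL divergence) under a common channel. Concretely, let $P$ denote the transition matrix of the Metropolis--Hastings chain used in the likelihood step at iteration $k$; crucially, the \emph{same} kernel $P$ is applied to both $\pi_{k(t^*+1)}$ and $\mu_{k(t^*+1)}$ in the time interval $\tau \in [k(t^*+1), k(t^*+1)+1]$, since the proposal distribution and the target $\exp(-f(\rvz;\rvy) - D(\rvx^{(k)},\rvz;\eta))$ defining the acceptance ratio do not depend on whether the current law is $\pi$ or $\mu$. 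Then $\pi_{k(t^*+1)+1} = P^{\mathsf{T}}\pi_{k(t^*+1)}$ and $\mu_{k(t^*+1)+1} = P^{\mathsf{T}}\mu_{k(t^*+1)}$ (with the convention that distributions are column vectors evolved by $\mQ$ as in the paper's notation), i.e.\ both marginals are pushed forward through the same stochastic channel.

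First I would state the general data processing inequality: for any two probability measures $\mu, \pi$ on $\mathcal X$ and any Markov kernel $P$, one has $\mathsf{KL}(P^{\mathsf{T}}\pi \,\|\, P^{\mathsf{T}}\mu) \le \mathsf{KL}(\pi \,\|\, \mu)$. The standard one-line justification is the joint convexity of $(a,b)\mapsto a\log(a/b)$ together with the fact that $P$ does not depend on the input measure: writing the pushforward masses as convex combinations $ (P^{\mathsf{T}}\pi)(\rvx_j) = \sum_i P^{[j,i]}\pi(\rvx_i)$ and similarly for $\mu$, Jensen's inequality gives
\begin{equation*}
    \sum_j (P^{\mathsf{T}}\pi)(\rvx_j)\log\frac{(P^{\mathsf{T}}\pi)(\rvx_j)}{(P^{\mathsf{T}}\mu)(\rvx_j)} \le \sum_j \sum_i P^{[j,i]}\,\pi(\rvx_i)\log\frac{\pi(\rvx_i)}{\mu(\rvx_i)} = \sum_i \pi(\rvx_i)\log\frac{\pi(\rvx_i)}{\mu(\rvx_i)},
\end{equation*}
where the last equality uses $\sum_j P^{[j,i]} = 1$. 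Applying this with $P$ the Metropolis--Hastings kernel and relabeling indices by the time-interpolation convention of \method\ yields exactly the claimed inequality $\mathsf{KL}(\pi_{k(t^*+1)}\|\mu_{k(t^*+1)}) \ge \mathsf{KL}(\pi_{k(t^*+1)+1}\|\mu_{k(t^*+1)+1})$.

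The only genuine subtlety — and the step I would be most careful about — is confirming that the channel really is shared between the two processes: the Metropolis--Hastings acceptance probability $\min\{1, \cdot\}$ depends on the target density ratio and the proposal, both of which are the same for $\pi_\tau$ and $\mu_\tau$ (they differ only in the \emph{current state's law}, not in the update rule), so $P$ is indeed independent of the input measure. If the likelihood step were run for several MH sub-steps rather than one, I would simply note that the composition of stochastic kernels is again a stochastic kernel, so the same argument applies verbatim. I would also remark that this is precisely the discrete-state analogue of the contraction-of-relative-entropy-under-the-heat-semigroup fact used in the continuous diffusion analyses of \cite{wu2024principled,sun2024provableprobabilisticimagingusing}, which is why it slots cleanly into the proof of Theorem~\ref{theorem:main}.
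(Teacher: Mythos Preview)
Your argument is correct and is exactly the standard proof of the data processing inequality for KL divergence under a common Markov kernel. The paper itself states Lemma~\ref{lemma:dpi} without proof, treating it as a well-known fact, so there is nothing to compare against; your write-up would serve as a perfectly adequate justification. One small remark: the displayed inequality you attribute to ``Jensen's inequality'' is more precisely the log-sum inequality (equivalently, joint convexity \emph{plus} positive homogeneity of $(a,b)\mapsto a\log(a/b)$), since the inner weights $P^{[j,i]}$ summed over $i$ need not equal $1$; the conclusion is unaffected.
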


\begin{lemma}[Free-energy-rate-functional-relative-Fisher-information (FIR) inequality (from Theorem 6.2.3. in~\cite{hilder2017fir})]
\label{lemma:FIR}
Consider two continuous time Markov chains: $\partial_t \pi_t = \mQ_t \pi_t$ and $\partial_t \mu_t = \tilde \mQ_t \mu_t$. Suppose Assumption (iv) holds, then there exists a constant $c>0$, such that
\begin{equation}
    \partial_t \mathsf{KL}(\pi_t\|\mu_t ) \leq -\frac{1}{2} \mathsf{FI}_{\tilde\mQ_t} (\pi_t\|\mu_t) + \frac{2}{c} \mathcal L_{\tilde\mQ_t}(\pi_t, \mQ_t\pi_t),
\end{equation}
where $\mathcal L_{\tilde\mQ}(\pi_t, \partial_t \pi_t)\geq 0$ is the Lagrangian defined as
\begin{equation}
    \mathcal L_{\tilde\mQ}(\pi_t,\partial_t \pi_t) = \sup_{\varphi\in \mathcal C_b(\mathcal X)} \left[ \langle \varphi, \partial_t \pi_t\rangle - \sum_{\rvx_i,\rvx_j\in \mathcal X} \pi_t(\rvx_i) \tilde\mQ^{[j,i]} \exp(\varphi(\rvx_j)-\varphi(\rvx_i))\right].
\end{equation}
\end{lemma}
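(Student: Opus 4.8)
The plan is to derive this as a \emph{free-energy/rate-functional/Fisher-information} inequality in the variational (generalized-gradient-flow) framework for Markov jump processes of \cite{hilder2017fir}, combining convex duality of the rate functional with the entropy chain rule. The starting observation is that the Lagrangian $\mathcal{L}_{\tilde\mQ_t}(\pi_t,\cdot)$ is, by its very definition, the Legendre--Fenchel transform in the flux variable of the Hamiltonian $\mathcal{H}_{\tilde\mQ_t}(\pi_t,\varphi):=\sum_{\rvx_i,\rvx_j\in\mathcal X}\pi_t(\rvx_i)\tilde\mQ_t^{[j,i]}\exp(\varphi(\rvx_j)-\varphi(\rvx_i))$. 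Hence, for \emph{every} bounded test function $\varphi$, one has the Young-type lower bound
\begin{equation*}
    \mathcal{L}_{\tilde\mQ_t}(\pi_t,\mQ_t\pi_t)\ \geq\ \langle\varphi,\mQ_t\pi_t\rangle-\mathcal{H}_{\tilde\mQ_t}(\pi_t,\varphi),
\end{equation*}
and the whole argument reduces to feeding in a single well-chosen $\varphi$ and matching terms.

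Next I would compute the entropy production by the chain rule. Writing $h_t:=\pi_t/\mu_t$ and differentiating $\mathsf{KL}(\pi_t\|\mu_t)=\langle\pi_t,\log h_t\rangle$, the two Kolmogorov equations $\partial_t\pi_t=\mQ_t\pi_t$ and $\partial_t\mu_t=\tilde\mQ_t\mu_t$ both enter; mass conservation $\sum_i\partial_t\pi_t(\rvx_i)=0$ cancels the self-term, and after reindexing (using that the columns of $\tilde\mQ_t$ sum to zero) the reference correction collapses to a single edge sum. With the shorthand $a:=h_t(\rvx_j)/h_t(\rvx_i)$ this yields the clean identity
\begin{equation*}
    \partial_t\mathsf{KL}(\pi_t\|\mu_t)=\langle\mQ_t\pi_t,\log h_t\rangle-\sum_{\rvx_i,\rvx_j\in\mathcal X}\pi_t(\rvx_i)\tilde\mQ_t^{[j,i]}(a-1),
\end{equation*}
which identifies $\log h_t$ as the natural thermodynamic force and, crucially, isolates the same edge structure $\sum\pi_t(\rvx_i)\tilde\mQ_t^{[j,i]}\psi(a)$, with $\psi(a):=a-1-\log a\geq0$, that defines $\mathsf{FI}_{\tilde\mQ_t}(\pi_t\|\mu_t)$.

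The core step is then to choose $\varphi=\theta\log h_t$ for a small scalar $\theta>0$. Substituting into the Young bound, using $\sum_{\rvx_i,\rvx_j}\pi_t(\rvx_i)\tilde\mQ_t^{[j,i]}=0$ to subtract a constant, and eliminating $\langle\mQ_t\pi_t,\log h_t\rangle$ via the chain-rule identity, I obtain
\begin{equation*}
    \partial_t\mathsf{KL}(\pi_t\|\mu_t)\ \leq\ \frac{1}{\theta}\,\mathcal{L}_{\tilde\mQ_t}(\pi_t,\mQ_t\pi_t)-\frac{1}{\theta}\sum_{\rvx_i,\rvx_j\in\mathcal X}\pi_t(\rvx_i)\tilde\mQ_t^{[j,i]}\big[\theta(a-1)-(a^{\theta}-1)\big].
\end{equation*}
Since the diagonal terms vanish ($a=1$) and the off-diagonal rates $\tilde\mQ_t^{[j,i]}\geq0$ are nonnegative, the last sum dominates $\tfrac12\mathsf{FI}_{\tilde\mQ_t}(\pi_t\|\mu_t)$ as soon as the pointwise inequality $G(a):=(a-1)-\tfrac1\theta(a^{\theta}-1)-\tfrac12\psi(a)\geq0$ holds for every admissible $a$. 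Assumption (iv) confines $|\log h_t|\leq B$, hence $a\in[e^{-2B},e^{2B}]$, a fixed compact interval; a short expansion gives $G(a)=\tfrac12\psi(a)-\tfrac\theta2(\log a)^2+O(\theta^2)\geq0$ on this interval once $\theta$ is taken small enough (depending only on $B$). Setting $c:=2\theta$ turns $\tfrac1\theta$ into $\tfrac2c$ and produces exactly the claimed $-\tfrac12\mathsf{FI}+\tfrac2c\mathcal{L}$.

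The main obstacle I anticipate is precisely this last pointwise comparison: securing a single positive constant $c=c(B)$ for which $G\geq0$ holds \emph{uniformly} over the compact range of ratios and over all times $t$. This is exactly where Assumption (iv) is indispensable --- without a uniform two-sided bound on $\log(\mu_t/\pi_t)$ the ratios $a$ could approach $0$ or $\infty$ and no fixed $\theta$ (equivalently $c$) would keep $G$ nonnegative. A secondary technical point is that $\tilde\mQ_t$ is only assumed strongly irreducible (Assumption (iii)), not reversible, so the Hamiltonian-to-Fisher comparison must be performed edge-by-edge rather than through a symmetric quadratic form as in the continuous-diffusion analyses of \cite{sun2024provableprobabilisticimagingusing,wu2024principled}.
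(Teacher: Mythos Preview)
Your proposal is correct and follows essentially the same route as the paper: both compute $\partial_t\mathsf{KL}$ via the chain rule, invoke the Young/Legendre inequality defining $\mathcal L_{\tilde\mQ_t}$ with the scaled test function $\varphi=\theta\log(\pi_t/\mu_t)$, and use Assumption~(iv) to secure a uniform constant. The only cosmetic difference is that the paper first writes $\partial_t\mathsf{KL}=-\mathsf{FI}_{\tilde\mQ_t}-\langle\log(\pi_t/\mu_t),(\tilde\mQ_t-\mQ_t)\pi_t\rangle$ and then cites Hilder's Lemma~6.2.2 (the quadratic scaling $\tilde{\mathcal H}(\pi_t,\lambda\varphi)\leq\tfrac{\lambda^2}{c}\tilde{\mathcal H}(\pi_t,\varphi)$ for $\|\varphi\|_\infty\leq B$) before choosing $\lambda=c/2$, whereas you fold these steps together and verify the resulting pointwise inequality $G(a)\geq 0$ on $[e^{-2B},e^{2B}]$ directly; these are the same argument.
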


\begin{proof}[Proof Sketch.]
This lemma follows from Theorem 6.2.3. in ~\citet{hilder2017fir}. For completeness, we provide a sketch of the proof here.

By a direct calculation of derivatives, we have
\begin{equation*}
    \begin{aligned}
\partial_t \mathsf{KL}(\pi_t\| \mu_t) &= \partial_t \sum_{\rvx_i} \pi_t(\rvx_i) \log \frac{\pi_t(\rvx_i)}{\mu_t(\rvx_i)} = \sum_{\rvx_i} \left(\partial_t \pi_t(\rvx_i) \log \frac{\pi_t(\rvx_i)}{\mu_t(\rvx_i)} - \frac{\pi_t(\rvx_i)}{\mu_t(\rvx_i)}\partial_t \mu_t(\rvx_i)\right)\\
& = \sum_{\rvx_i, \rvx_j} \left({\mQ}_t^{[i,j]}\pi_t(\rvx_j)\log \frac{\pi_t(\rvx_i)}{\mu_t(\rvx_i)} - \frac{\pi_t(\rvx_i)}{\mu_t(\rvx_i)}\tilde\mQ_t^{[i,j]} \mu_t(\rvx_j)\right)\\
& = \sum_{\rvx_i, \rvx_j} \tilde\mQ_t^{[i,j]} \mu_t(\rvx_j)\left( \frac{\pi_t(\rvx_i)}{\mu_t(\rvx_i)} \log \frac{\pi_t(\rvx_i)}{\mu_t(\rvx_i)} - \frac{\pi_t(\rvx_i)}{\mu_t(\rvx_i)} \right) - \sum_{\rvx_i, \rvx_j} (\tilde\mQ_t^{[i,j]}  - \mQ_t^{[i,j]})\pi_t(\rvx_j)\log \frac{\pi_t(\rvx_i)}{\mu_t(\rvx_i)} 
    \end{aligned}
\end{equation*}
Using the fact that $\sum_{\rvx_i} \tilde\mQ_t^{[i,j]} = 0$ and the definition in \cref{eqn-fi-discrete}, we have the equality
\[\sum_{\rvx_i, \rvx_j} \tilde\mQ_t^{[i,j]} \mu_t(\rvx_j)\left( \frac{\pi_t(\rvx_i)}{\mu_t(\rvx_i)} \log \frac{\pi_t(\rvx_i)}{\mu_t(\rvx_i)} - \frac{\pi_t(\rvx_i)}{\mu_t(\rvx_i)} \right) = -\mathsf{FI}_{\tilde\mQ_t} (\pi_t \|\mu_t)\]
Using the relation $\partial_t \pi_t = \mQ_t \pi_t$ this leads to
\begin{equation}
\label{eq:yau}
    \partial_t \mathsf{KL}(\pi_t\| \mu_t) = -\mathsf{FI}_{\tilde\mQ_t} (\pi_t \|\mu_t) - \underbrace{(\log\frac{\pi_t}{\mu_t})^T (\tilde\mQ_t - \partial_t) \pi_t}_{\textit{error term}}.
\end{equation}
This formula is similar to Lemma 1 in~\citet{yau1991relative}.

We define
\begin{align}
    \mathcal L_{\tilde\mQ_t}(\pi_t, \partial_t \pi_t) &= \sup_{\varphi\in \mathcal C_b(\mathcal X)} [\langle \varphi, \partial_t \pi_t\rangle - (e^{-\varphi}\pi_t)^T \tilde\mQ_t^T e^{\varphi}] \nonumber \\
    &= \sup_{\varphi\in \mathcal C_b(\mathcal X)} [\langle \varphi, \partial_t \pi_t - \tilde\mQ_t \pi_t\rangle - \underbrace{\pi_t^T (e^{-\varphi}\tilde\mQ_t^T e^{\varphi} - \tilde\mQ_t^T\varphi)}_{\textit{denoted as}\  \tilde {\mathcal H}(\pi_t, \varphi)}] 
\end{align}
By the variational characterization of the Lagrangian, for any continuous, bounded $\varphi$,
\begin{equation}
    \langle \varphi, \partial_t \pi_t - \tilde\mQ_t \pi_t\rangle  \leq \mathcal L_{\tilde\mQ_t} (\pi_t, \partial_t \pi_t) + \tilde {\mathcal H}(\pi_t, \varphi). \label{eq:lagr-bound}
\end{equation}

If we choose $\varphi= \log \frac{\pi_t}{\mu_t}$, the inequality above gives a bound to the error term in~\cref{eq:yau}. 
Moreover, it is easy to verify that
\begin{equation}
    \tilde {\mathcal H}(\pi_t, \log \frac{\pi_t}{\mu_t}) = \mathsf{FI}_{\tilde \mQ_t}(\pi_t\|\mu_t).
\end{equation}

In fact, when the space $\mathcal X$ is continuous, choosing $\varphi(\rvx) = \log \frac{\pi_t}{\mu_t}(\rvx)$ exactly recovers Lemma A.4 in~\cite{wu2024principled}. However, as pointed out in~\cite{hilder2017fir}, it is necessary to consider a rescaled $\varphi = \lambda \log  \frac{\pi_t}{\mu_t}$ with $\lambda \in (0,1)$ to derive a bound in finite space $\mathcal X$.

Fortunately, according to Lemma 6.2.2. in~\cite{hilder2017fir}, for any $\varphi\in \mathcal C_b(\mathcal X)$ with $\|\varphi\|_\infty \leq B$, there exists a positive constant $c = c(B)\in (0,1)$, such that
\begin{equation}
    \tilde {\mathcal H}(\pi_t, \lambda\varphi) \leq \frac{\lambda^2}{c} \tilde {\mathcal H}(\pi_t, \varphi).
\end{equation}

Plugging in $\varphi = \lambda \log \frac{\pi_t}{\mu_t}$ in~\cref{eq:lagr-bound}, we have
\begin{align}
    \langle  \lambda \log\frac{\pi_t}{\mu_t}, \partial_t \pi_t - \tilde\mQ_t \pi_t\rangle &\leq \mathcal L_{\tilde\mQ_t} (\pi_t, \partial_t \pi_t) + \tilde {\mathcal H}(\pi_t,\lambda \log \frac{\pi_t}{\mu_t})\\
    & \leq \mathcal L_{\tilde\mQ_t} (\pi_t, \partial_t \pi_t) + \frac{\lambda^2}{c}\tilde {\mathcal H}(\pi_t,\log \frac{\pi_t}{\mu_t})
\end{align}

Combine this with~\cref{eq:yau}, 
\begin{align}
        \partial_t \mathsf{KL}(\pi_t\| \mu_t) &\leq -\mathsf{FI}_{\tilde\mQ_t} (\pi_t \|\mu_t) + \frac{1}{\lambda} \mathcal L_{\tilde\mQ_t} (\pi_t, \partial_t \pi_t) + \frac{\lambda}{c} \tilde {\mathcal H}(\pi_t, \log \frac{\pi_t}{\mu_t}) \nonumber \\
        &= -(1 - \frac{\lambda}{c}) \mathsf{FI}_{\tilde\mQ_t} (\pi_t \|\mu_t) + \frac{1}{\lambda} \mathcal L_{\tilde\mQ_t} (\pi_t, \partial_t \pi_t).
\end{align}

Finally, choosing $\lambda = \frac{c}{2}\in (0,1)$ recovers the statement of Lemma~\ref{lemma:FIR}.

\end{proof}

\vspace{-15pt}


\begin{lemma} When the learned score function $s_\theta$ satisfies Assumption (i) and (v),
\label{lemma:Q_err}
    \begin{equation}
        \mathcal L_{\mQ_t}(\mu_t, \tilde \mQ_t \mu_t) \leq M\epsilon.
    \end{equation}
\end{lemma}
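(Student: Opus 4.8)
The plan is to unfold the variational definition of the Lagrangian and reduce the supremum over test functions $\varphi$ to a family of decoupled one-dimensional problems, one per off-diagonal entry of $\mQ_t$, each an elementary convex optimization controlled by the pointwise score ratio $s_\theta/s$. Since $\partial_t\mu_t=\tilde\mQ_t\mu_t$, the quantity to bound is
\begin{equation*}
  \mathcal L_{\mQ_t}(\mu_t,\tilde\mQ_t\mu_t)=\sup_{\varphi\in\mathcal C_b(\mathcal X)}\Big[\langle\varphi,\tilde\mQ_t\mu_t\rangle-\sum_{\rvx_i,\rvx_j\in\mathcal X}\mu_t(\rvx_i)\,\mQ_t^{[j,i]}\,e^{\varphi(\rvx_j)-\varphi(\rvx_i)}\Big].
\end{equation*}
Using $\langle\varphi,\tilde\mQ_t\mu_t\rangle=\sum_{\rvx_i,\rvx_j}\mu_t(\rvx_i)\,\tilde\mQ_t^{[j,i]}\,\varphi(\rvx_j)$ together with the generator identities $\sum_{\rvx_j}\mQ_t^{[j,i]}=\sum_{\rvx_j}\tilde\mQ_t^{[j,i]}=0$, I would rewrite the bracket as $\sum_i\mu_t(\rvx_i)\sum_{j\neq i}\big[\tilde\mQ_t^{[j,i]}\Delta_{ij}-\mQ_t^{[j,i]}(e^{\Delta_{ij}}-1)\big]$ with $\Delta_{ij}:=\varphi(\rvx_j)-\varphi(\rvx_i)$ (the diagonal terms vanish identically). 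Since $\varphi\mapsto(\Delta_{ij})_{i\neq j}$ merely parametrizes a linear subspace of increments, enlarging the feasible set to arbitrary real increments gives the decoupled upper bound
\begin{equation*}
  \mathcal L_{\mQ_t}(\mu_t,\tilde\mQ_t\mu_t)\le\sum_i\mu_t(\rvx_i)\sum_{j\neq i}\ \sup_{\delta\in\mathbb R}\big[\tilde\mQ_t^{[j,i]}\delta-\mQ_t^{[j,i]}(e^{\delta}-1)\big].
\end{equation*}

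Next I would evaluate each scalar problem. For $j\neq i$ set $b:=\mQ_t^{[j,i]}$ and $a:=\tilde\mQ_t^{[j,i]}$; by the identity $\mQ_t^{[j,i]}=s(\rvx_i;T-t)_{\rvx_j}\,\mQ_{\mathrm{uniform}}^{[i,j]}$ (and likewise with $s_\theta$ for $\tilde\mQ_t$) and Assumption~(i), one has $b>0$, $a\ge0$, and $a/b=s_\theta(\rvx_i;T-t)_{\rvx_j}/s(\rvx_i;T-t)_{\rvx_j}\in[1-\epsilon,1+\epsilon]$. The maximizer of $\delta\mapsto a\delta-b(e^\delta-1)$ is $\delta^\star=\log(a/b)$, so $\sup_\delta[a\delta-b(e^\delta-1)]=b\,\psi(a/b)$ with $\psi(u):=u\log u-u+1$. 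Since $\psi$ is convex with $\psi(1)=\psi'(1)=0$, it attains its maximum over $[1-\epsilon,1+\epsilon]$ at an endpoint, and $\psi(1+\epsilon)\le\epsilon^2$ and $\psi(1-\epsilon)\le\epsilon^2$ follow from $\log(1+\epsilon)\le\epsilon$ and $(1-\epsilon)\log(1-\epsilon)\le\epsilon^2-\epsilon$. Hence each scalar supremum is at most $\epsilon^2\,\mQ_t^{[j,i]}$.

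Summing over edges gives $\mathcal L_{\mQ_t}(\mu_t,\tilde\mQ_t\mu_t)\le\epsilon^2\sum_i\mu_t(\rvx_i)\sum_{j\neq i}\mQ_t^{[j,i]}=\epsilon^2\sum_i\mu_t(\rvx_i)\,|\mQ_t^{[i,i]}|$, and Assumption~(v) bounds the exit rate by $|\mQ_t^{[i,i]}|\le\| |\mQ_t|\|_1\le M$, so $\mathcal L_{\mQ_t}(\mu_t,\tilde\mQ_t\mu_t)\le\epsilon^2 M\le\epsilon M$ using $\epsilon\le1$. The routine parts are the one-line convex-analysis computation of $\sup_\delta[a\delta-b(e^\delta-1)]$ and the generator bookkeeping (keeping straight which index is ``source'' versus ``target'', hence the column-sum-zero identities). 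The step I expect to need the most care is the relaxation from $\sup_\varphi$ over globally coupled increments to decoupled edgewise suprema: this is only an inequality, but it is precisely what makes the estimate tractable, and it hinges on the off-diagonal rates $\mQ_t^{[j,i]}$ being nonnegative so that each relaxed scalar problem stays bounded above.
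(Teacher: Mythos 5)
Your proposal is correct and follows essentially the same route as the paper's proof: unfold the variational definition of $\mathcal L_{\mQ_t}$, use the zero-column-sum property to center $\varphi$, upper-bound by decoupling into edgewise scalar suprema (valid since $\mu_t(\rvx_i)\mQ_t^{[j,i]}\geq 0$ for $i\neq j$), solve each in closed form in terms of the score ratio $u=s_\theta/s\in[1-\epsilon,1+\epsilon]$, and conclude with Assumption (v). Your bookkeeping (keeping the $e^\delta-1$ form so each edge contributes $\mQ_t^{[j,i]}\psi(u)$ with $\psi(u)=u\log u-u+1\leq\epsilon^2$) is slightly cleaner than the paper's split into off-diagonal and diagonal corrections and even yields the marginally sharper bound $M\epsilon^2\leq M\epsilon$.
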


\begin{proof}
    By definition,
    \begin{align}
        \mathcal L_{\mQ_t}(\mu_t, \tilde \mQ_t \mu_t) &= \sup_{\varphi\in \mathcal C_b(\mathcal X)} \sum_{\rvx_i, \rvx_j\in \mathcal X} \left[\mu_t(\rvx_i) \tilde \mQ_t^{[j,i]} \varphi(\rvx_j) - \mu_t(\rvx_i) \mQ_t^{[j,i]} e^{\varphi(\rvx_j) - \varphi(\rvx_i)}\right]\nonumber\\
        &= \sup_{\varphi\in \mathcal C_b(\mathcal X)} \sum_{\rvx_i, \rvx_j\in \mathcal X} \left[\mu_t(\rvx_i) \tilde \mQ_t^{[j,i]} (\varphi(\rvx_j) - \varphi(\rvx_i))  - \mu_t(\rvx_i) \mQ_t^{[j,i]} e^{\varphi(\rvx_j) - \varphi(\rvx_i)}\right] \tag{$\mathbf 1^T\tilde\mQ_t \rvu = 0$ for any $\rvu$} \\
        &= \sup_{\varphi\in \mathcal C_b(\mathcal X)}\sum_{\rvx_i\neq \rvx_j} \left[\mu_t(\rvx_i) \mQ_t^{[j,i]} \Big(-\frac{\tilde\mQ_t^{[j,i]}}{\mQ_t^{[j,i]}} z_{ij} - e^{-z_{ij}}\Big) \right]\tag{$z_{ij} = \varphi(\rvx_i) - \varphi(\rvx_j)$} + \sum_{\rvx_i} \mu_t(\rvx_i) (\tilde{\mQ}_t^{[i,i]}-\mQ_t^{[i,i]})\\
        &\leq \sum_{\rvx_i\neq\rvx_j} \left[\mu_t(\rvx_i) \mQ_t^{[j,i]} \cdot\sup_{z_{ij}\in \mathbb R} \Big(-\frac{\tilde\mQ_t^{[j,i]}}{\mQ_t^{[j,i]}}  z_{ij} - e^{-z_{ij}}\Big) \right] +  \epsilon \sum_{\rvx_i} \mu_t(\rvx_i) |\mQ_t^{[i,i]}|.
        \label{eq:score_err_bound}
    \end{align}
    where the inequality is due to swapping $\sup$ and summation, and that $\mu_t(\rvx_i) \mQ_t^{[j,i]}\geq 0$ for $i\neq j$.

    Consider the function $g(z) =  - uz - e^{-z}$. When $u \geq 0$, this function is maximized when $z = -\log u$. Using $u = \mQ_t^{[j,i]}/\tilde\mQ_t^{[j,i]}$,
    \begin{equation}
        \sup_{z_{ij}}\Big(-\frac{\mQ_t^{[j,i]}}{\tilde\mQ_t^{[j,i]}}  z_{ij} - e^{-z_{ij}}\Big) = (u-1)\log u \leq |u-1|\leq \epsilon,
    \end{equation}
    for $\epsilon \leq 1$.
    Combining with \cref{eq:score_err_bound}, we have
    \begin{equation}
        \mathcal L_{\mQ_t} (\mu_t, \tilde \mQ_t \mu_t) \leq \epsilon \mathbf 1^T|\mQ_t| \mu_t \leq \epsilon \||\mQ_t|\|_1 \|\mu_t\|_1 \leq \epsilon M.
    \end{equation}
\end{proof}

\textbf{Additional Notes on Lemma~\ref{lemma:Q_err}.}

With a slightly different derivation, assumptions (i) and (v) can potentially be replaced by the following assumption that links directly to the score entropy loss defined in Definition 3.1 of~\citet{lou2024discrete}.

\textbf{Alternative assumption.}
\begin{itemize}
\item [(i$^\prime$)] Score function is well estimated, i.e., the score entropy loss for distributions on the stationary process is bounded: 
    $$\mathbb E_{\rvx_i\sim \pi_t}\mathsf{SE}_{\mQ^{\mathsf{fw}}}(\rvx_i; \theta,t) := \mathbb E_{\rvx_i\sim \pi_t}\sum_{\rvx_j\neq \rvx_i} K\left(\frac{s_\theta(\rvx_i;t)_{\rvx_j}}{s(\rvx_i;t)_{\rvx_j}}\right) \mQ_t^{\mathsf{fw}\ [i,j]}s(\rvx_i; t)_{\rvx_j}\leq \epsilon_{\mathsf{SE}},$$ for any $t\in [0,1]$, where $K(a) = a - 1 - \log a$.
\end{itemize}

\begin{lemma} When the learned score function $s_\theta$ satisfies Assumption (i$^\prime$),
\label{lemma:Q_err_alt}
    \begin{equation}
        \mathcal L_{\tilde \mQ_t}(\pi_t, \mQ_t \pi_t) \leq \epsilon_{\mathsf{SE}}.
    \end{equation}
\end{lemma}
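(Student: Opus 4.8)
The plan is to mirror the derivation of Lemma~\ref{lemma:Q_err}, but with the roles of the generator matrices swapped: now we compute $\mathcal L_{\tilde\mQ_t}(\pi_t,\mQ_t\pi_t)$, expanding the Lagrangian over the stationary distribution $\pi_t$ rather than $\mu_t$. First I would write out the variational definition,
\begin{equation*}
\mathcal L_{\tilde\mQ_t}(\pi_t,\mQ_t\pi_t) = \sup_{\varphi\in\mathcal C_b(\mathcal X)}\Big[\langle\varphi,\mQ_t\pi_t\rangle - \sum_{\rvx_i,\rvx_j}\pi_t(\rvx_i)\tilde\mQ_t^{[j,i]}e^{\varphi(\rvx_j)-\varphi(\rvx_i)}\Big],
\end{equation*}
and use $\mathbf 1^T\tilde\mQ_t = \mathbf 1^T\mQ_t = 0$ to rewrite both terms in terms of the increments $z_{ij}=\varphi(\rvx_i)-\varphi(\rvx_j)$, so that only off-diagonal entries (all nonnegative by strong irreducibility, Assumption (iii)) appear. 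This reduces the inner sup to a pointwise sup of $g(z) = -u z - e^{-z}$ with $u = \tilde\mQ_t^{[j,i]}/\mQ_t^{[j,i]}$, which is attained at $z=-\log u$ with value $(u-1)\log u$.

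The second step is to bound $(u-1)\log u$ in terms of the score entropy loss. The key observation is that $\tilde\mQ_t$ is built from the learned score $s_\theta$ and $\mQ_t$ from the true score $s$ via the same formula $\mQ_t^{[j,i]} = s(\rvx_i;T-t)_{\rvx_j}\mQ^{\mathsf{fw}[i,j]}_{\text{uniform}}$, so the ratio $u$ equals exactly $s_\theta(\rvx_i;T-t)_{\rvx_j}/s(\rvx_i;T-t)_{\rvx_j}$ — the argument of the function $K(a)=a-1-\log a$ in Assumption (i$'$). Writing $a = u$, note $(u-1)\log u = (a-1)\log a \le (a-1)\log a + (\text{something nonnegative})$; more precisely I would show $(a-1)\log a \le a - 1 - \log a = K(a)$, which holds for all $a>0$ since $(a-1)\log a - K(a) = (a-1)\log a - (a-1) + \log a = (a-1)(\log a - 1) + \log a$; a cleaner route is to note $(a-1)\log a \le K(a)$ because $K$ dominates: for $a\ge 1$, $\log a \le a-1$ gives $(a-1)\log a \le (a-1)^2$, and one checks $(a-1)^2 \le K(a)$ fails in general — so instead I would directly use the elementary inequality $(a-1)\log a \le K(a)\cdot C$ is not what's wanted; rather, since both $(a-1)\log a$ and $K(a)$ are nonnegative and vanish to second order at $a=1$, the sharp statement I actually need is simply $(a-1)\log a \le a-1-\log a$ for $a>0$, which rearranges to $a\log a \le a - 1 + \log a \cdot a = \dots$; I would verify this by differentiating $h(a) = a - 1 - \log a - (a-1)\log a$ and checking $h(1)=0$, $h'(a) = 1 - 1/a - \log a - (a-1)/a = -\log a - (a-1)/a + 1/a \cdot(\text{terms})$, concluding $h\ge 0$ on $(0,\infty)$. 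Granting this, each summand is bounded by $\pi_t(\rvx_i)\,\mQ_t^{[j,i]}\,K(u_{ij})$.

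The third step is bookkeeping: substitute $\mQ_t^{[j,i]} = s(\rvx_i;T-t)_{\rvx_j}\mQ^{\mathsf{fw}[i,j]}_{\text{uniform}}$ and recognize the resulting sum $\sum_{\rvx_i}\pi_t(\rvx_i)\sum_{\rvx_j\ne\rvx_i} K(s_\theta(\rvx_i;t)_{\rvx_j}/s(\rvx_i;t)_{\rvx_j})\,\mQ^{\mathsf{fw}[i,j]}\,s(\rvx_i;t)_{\rvx_j}$ as exactly $\mathbb E_{\rvx_i\sim\pi_t}\mathsf{SE}_{\mQ^{\mathsf{fw}}}(\rvx_i;\theta,t)$, which is $\le\epsilon_{\mathsf{SE}}$ by Assumption (i$'$); the diagonal terms cancel via $\mathbf 1^T\tilde\mQ_t=0$ as in the original lemma. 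The main obstacle is the elementary inequality $(a-1)\log a \le K(a)$ coupling the pointwise maximizer value to the $K$-functional in the score entropy — getting the constant to be exactly $1$ (so no extra factor degrades the bound) requires the right convexity argument, but it is a one-variable calculus fact and should go through cleanly. Everything else is a direct transcription of the Lemma~\ref{lemma:Q_err} computation with $\pi_t\leftrightarrow\mu_t$ and $\mQ_t\leftrightarrow\tilde\mQ_t$ interchanged.
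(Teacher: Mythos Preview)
There is a genuine gap. The elementary inequality you stake the proof on, $(a-1)\log a \le K(a)=a-1-\log a$, is \emph{false}: set $h(a)=K(a)-(a-1)\log a=a-1-a\log a$, note $h(1)=0$ and $h'(a)=-\log a$, so $h$ is maximized at $a=1$ and $h(a)\le 0$ for all $a>0$. In other words $(a-1)\log a\ge K(a)$ always, with equality only at $a=1$, so no amount of calculus will rescue the termwise bound you propose. Relatedly, your factorization is off: with $\tilde\mQ_t$ sitting in the exponential, the natural move is to pull out $\pi_t(\rvx_i)\tilde\mQ_t^{[j,i]}$, giving $g(z)=-uz-e^{-z}$ with $u=\mQ_t^{[j,i]}/\tilde\mQ_t^{[j,i]}=s/s_\theta$ (not $s_\theta/s$), and the pointwise supremum of $g$ is $u\log u-u$, not $(u-1)\log u$.

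The missing idea is that the diagonal term does \emph{not} cancel and is in fact the whole point. In Lemma~\ref{lemma:Q_err} the diagonal was merely bounded crudely by $\epsilon M$; here it must be kept exactly. Writing $-\sum_i\pi_t(\rvx_i)\tilde\mQ_t^{[i,i]}=\sum_{i\ne j}\pi_t(\rvx_i)\tilde\mQ_t^{[j,i]}$ and folding it back into the off-diagonal sum contributes a ``$+1$'' to each summand, so the per-term bound becomes $\pi_t(\rvx_i)\tilde\mQ_t^{[j,i]}(u\log u-u+1)$. Now substitute $\tilde\mQ_t^{[j,i]}=s_\theta\,\mQ^{\mathsf{fw}[i,j]}$ and $u=s/s_\theta$: the expression collapses to $\pi_t(\rvx_i)\,\mQ^{\mathsf{fw}[i,j]}\,s\,\bigl(s_\theta/s-1-\log(s_\theta/s)\bigr)=\pi_t(\rvx_i)\,\mQ^{\mathsf{fw}[i,j]}\,s\,K(s_\theta/s)$, which is \emph{exactly} the integrand of the score entropy. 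Summing and invoking Assumption~(i$'$) gives $\epsilon_{\mathsf{SE}}$ with no further inequality needed. So the route is not ``bound the sup value termwise by $K$'' but ``combine the sup value with the diagonal to \emph{equal} $K$.''
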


\begin{proof}
    \begin{align}
        \mathcal L_{\tilde\mQ_t}(\pi_t, \mQ_t \pi_t) 
        &= \sup_{\varphi\in \mathcal C_b(\mathcal X)}\sum_{\rvx_i\neq \rvx_j} \left[\pi_t(\rvx_i) \tilde\mQ_t^{[j,i]} \left(\frac{\mQ_t^{[j,i]}}{\tilde\mQ_t^{[j,i]}} ( 1 - z_{ij}) - e^{-z_{ij}}\right) \right] + \sum_{\rvx_i} \pi_t(\rvx_i) (\mQ_t^{[i,i]}-\tilde\mQ_t^{[i,i]})\nonumber \\
        &\leq \sum_{\rvx_i\neq \rvx_j}\left[\pi_t(\rvx_i)\tilde\mQ_t^{[j,i]}\frac{s(\rvx_i;t)}{s_\theta(\rvx_i;t)}\log \frac{s(\rvx_i;t)}{s_\theta(\rvx_i;t)}\right] - \sum_{\rvx_i} \pi_t(\rvx_i)\sum_{j\neq i} (\mQ_t^{[j,i]} - \tilde\mQ_t^{[j,i]}) \label{eq:from_lemma_3}\\
        &=\sum_{\rvx_i\neq \rvx_j}\left[\pi_t(\rvx_i)\mQ_t^{[j,i]}\log \frac{s(\rvx_i;t)}{s_\theta(\rvx_i;t)}\right] - \sum_{\rvx_i} \pi_t(\rvx_i)\sum_{j\neq i} (\mQ_t^{[j,i]} - \tilde\mQ_t^{[j,i]}) \nonumber \\
        &= \sum_{\rvx_i\neq \rvx_j} \left[\pi_t(\rvx_i)\mQ_t^{\mathsf{fw}[i,j]} \Big(s_\theta(\rvx_i;t)_{\rvx_j} \log \frac{s(\rvx_i;t)}{s_\theta(\rvx_i;t)} - s(\rvx_i;t) + s_\theta(\rvx_i;t)\Big) \right]\nonumber \\
        &= \mathbb E_{\rvx_i\sim \pi_t} \sum_{\rvx_j\neq \rvx_i} \left[K\Big(\frac{s_\theta(\rvx_i;t)}{s(\rvx_i;t)}\Big) \mQ_t^{\mathsf{fw}[i,j]} s(\rvx_i;t)_{\rvx_j}\right] \leq \epsilon_{\mathsf{SE}}.
     \end{align}
     where \cref{eq:from_lemma_3} comes from the previous Lemma~\ref{lemma:Q_err}.
\end{proof}

\subsection{Proof of Theorem~\ref{theorem:main}}

\begin{proof}

Consider discretizing a prior step $[0,t^*]$ uniformly into $H$ intervals, $[h\delta, (h+1)\delta]$, for $h=0,\dots, H-1$, where $\delta = t^*/H$.
In the $h$-th interval, 
\begin{equation}
    \partial_t \mu_t = \tilde \mQ_{h\delta} \mu_t.
\end{equation}

Applying Lemma~\ref{lemma:FIR} on $\mu_t$ and $\pi_t$ in the $h$-th interval,
\begin{equation}
    \partial_t \mathsf{KL}(\pi_t\|\mu_t) \leq -\frac{1}{2} \mathsf{FI}_{\tilde\mQ_{h\delta}} (\pi_t \|\mu_t) + \frac{2}{c} \mathbb E_{\rvx_i\sim \pi_t}\mathcal L_{\tilde \mQ_{h\delta}}(\pi_t,\mQ_t\pi_t).
\end{equation}

After integrating both sides by $t$ and rearranging, (for simplicity of notation, we denote $\tau_k = k(t^*+1)+1$ the starting time of the $k$-th prior step)
\begin{align}
    \int_{\tau_k+h\delta}^{\tau_k+(h+1)\delta} \mathsf{FI}_{\mQ_t} (\mu_t\|\pi_t)\mathrm dt &\leq \int_{k(t^*+1)+1+h\delta}^{k(t^*+1)+1+(h+1)\delta} \left[- 2 \partial_t \mathsf{KL}(\mu_t\|\pi_t) + \frac{4}{c}  \mathcal L_{\mQ_t} (\mu_t, \tilde \mQ_{\tau_k + h\delta} \mu_t)\right] \mathrm dt\nonumber\\
    = 2\Big(\mathsf{KL} (\mu_{\tau_k+h\delta}\| \pi_{\tau_k+h\delta}) -& \mathsf{KL} (\mu_{\tau_k+(h+1)\delta} \| \pi_{\tau_k+(h+1)\delta})\Big) + \frac{4}{c}  \int_{\tau_k+h\delta}^{\tau_k + (h+1)\delta} \mathcal L_{\mQ_t} (\mu_t, \tilde \mQ_{\tau_k + h\delta} \mu_t)\mathrm dt.
\end{align}

Taking a summation over $h$,
\begin{align}
    \int_{k(t^*+1)+1}^{(k+1)(t^*+1)} \mathsf{FI}_{\mQ_t} (\mu_t\|\pi_t) \leq 2\Big(\mathsf{KL} (\mu_{k(t^*+1)+1}\|& \pi_{k(t^*+1)+1}) -  \mathsf{KL} (\mu_{(k+1)(t^*+1)}\| \pi_{(k+1)(t^*+1)})\Big) \nonumber \\
    &+ \sum_{h=0}^{H-1}  \int_{\tau_k + h\delta}^{\tau_k + (h+1)\delta}\frac{4}{c}  \mathcal L_{\mQ_t} (\mu_t, \tilde \mQ_{\tau_k + h\delta} \mu_t) \mathrm dt
\end{align}

Since $\|\frac{s_\theta(\cdot;t + s) - s(\cdot;t)}{s(\cdot;t)}\|_\infty\leq \epsilon + L s $. According to Lemma~\ref{lemma:Q_err}, and $\delta = t^*/H$,
\begin{align}
    \int_{\tau_k+h\delta}^{\tau_k + (h+1)\delta}\frac{4}{c}  \mathcal L_{\mQ_t} (\mu_t, \tilde \mQ_{\tau_k + h\delta} \mu_t) \mathrm dt &\leq\int_{0}^{\delta} \frac{4}{c} M(\epsilon + L s)\mathrm ds = \frac{4M}{c} \left(\frac{\epsilon t^*}{H} + \frac{L{t^*}^2}{2H^2}\right) 
\end{align}
Thus,
\begin{equation*}
    \int_{k(t^*+1)+1}^{(k+1)(t^*+1)} \mathsf{FI}_{\mQ_t} (\mu_t\|\pi_t) \leq 2\Big(\mathsf{KL} (\tilde p_{k(t^*+1)+1}\|p_{k(t^*+1)+1}) -  \mathsf{KL} (\tilde p_{(k+1)(t^*+1)}\| p_{(k+1)(t^*+1)})\Big) + \frac{4M}{c}(\epsilon t^* + \frac{L{t^*}^2}{2H})
\end{equation*}

Finally, taking summation over $k=0,\dots, K-1$, and applying~\cref{lemma:dpi} for each likelihood step, 

\begin{equation*}
     \sum_{k=0}^{K-1}\int_{k(t^*+1)+1}^{(k+1)(t^*+1)} \mathsf{FI}_{\mQ_t} (\mu_t\|\pi_t) \leq 2 \mathsf{KL}(\mu_0\|\pi_0) + \frac{4M}{c} (\epsilon Kt^* + \frac{LK{t^*}^2}{2H}).
\end{equation*}
Dividing by $Kt^*$ on both sides,
\begin{equation}
     \frac{1}{K}\sum_{k=0}^{K-1}\frac{1}{t^*}\int_{k(t^*+1)+1}^{(k+1)(t^*+1)} \mathsf{FI}_{\mQ_t}(\mu_t\|\pi_t) \leq \frac{2\mathsf{KL}(\mu_0\|\pi_0)}{Kt^*} + \frac{4M\epsilon}{c}  + \frac{2ML t^*}{cH}.
\end{equation}

\end{proof}

\vspace{-15pt}
\subsection{Potential function of split Gibbs samplers}
\label{appendix:sgs_converge}

As defined in~\cref{eq:split_distribution}, split Gibbs samplers draws samples from the augmented distribution
\begin{equation*}
    \pi(\rvx,\rvz;\eta) \prop \exp(-f(\rvz;\rvy) - g(\rvx) - D(\rvx,\rvz;\eta)).
\end{equation*}

The key requirement of split Gibbs samplers is that the potential function $D(\rvx,\rvz;\eta)$ satisfies
\begin{equation}
    \lim_{\eta \to 0^+} D(\rvx,\rvz;\eta) = \infty, \forall \rvx \neq \rvz.
\end{equation}
Or more precisely,
\begin{equation}
    \lim_{\eta\to 0^+} \frac{\exp(-D(\rvx,\rvz;\eta))}{\int \exp(-D(\rvx, \rvz;\eta)) \mathrm d\rvz} = \delta_{\rvx}(\rvz).
\end{equation}

\vspace{-10pt}
Therefore,
\begin{align}
    \lim_{\eta\to0^+} \pi^X(\rvx;\eta) &\prop \lim_{\eta\to 0^+} \int p(\rvx) p(\rvy|\rvz) \exp(-D(\rvx,\rvz;\eta)) \mathrm d\rvz\nonumber \\
    &=\int p(\rvx)p(\rvy|\rvz) \delta(\rvx - \rvz) \mathrm dz = p(\rvx) p(\rvy|\rvx). 
    \label{eq:sgs_stationary}
\end{align}

Also, the similar derivation holds for $\pi^Z(\rvz;\eta)$. This result has also been shown in~\cite{Vono_2019}. Combining this with \cref{theorem:main}, \method~is guaranteed to sample from the true posterior distribution when $\eta$ goes to zero.

\newpage
\section{Ablation studies}

\begin{figure}[H]
\begin{minipage}[t]{0.48\textwidth}
    
    \centering
    \includegraphics[width=\linewidth]{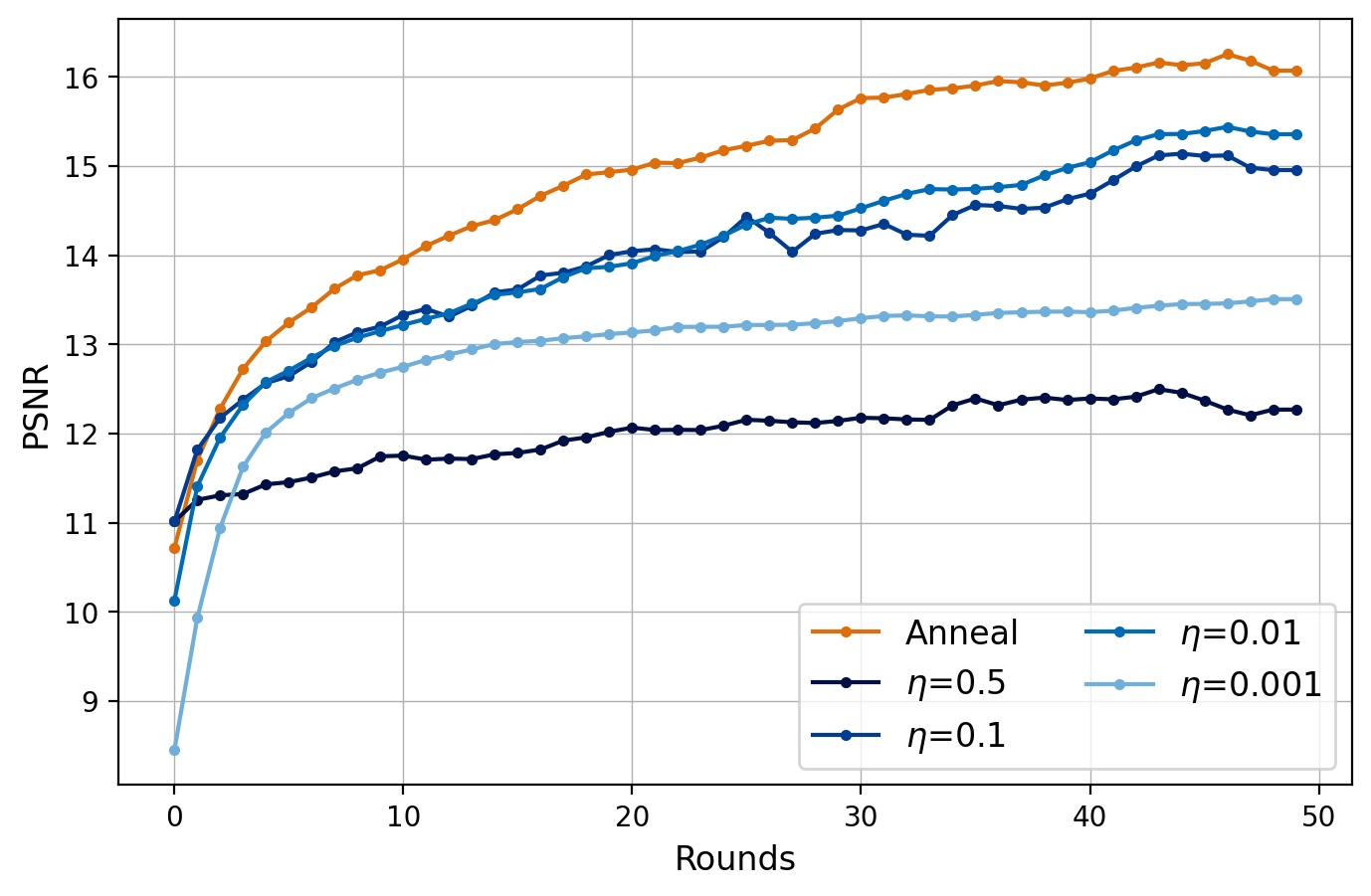}
    \caption{\textbf{Convergence speed of different noise schedules.} We compare our noise annealing scheduler $\{\eta_k\}_{k=1}^K$ (orange) to fixed noise schedules of various noise levels on the MNIST AND task. $y$-axis is the PSNR of variables $\rvx^{(k)}$ with respect to the ground truth. Our noise schedule converges faster than schedulers with fixed noises.}
    \label{fig:schedule}
\end{minipage}
\hfill
\begin{minipage}[t]{0.48\textwidth}
    \centering
    \includegraphics[width=\linewidth]{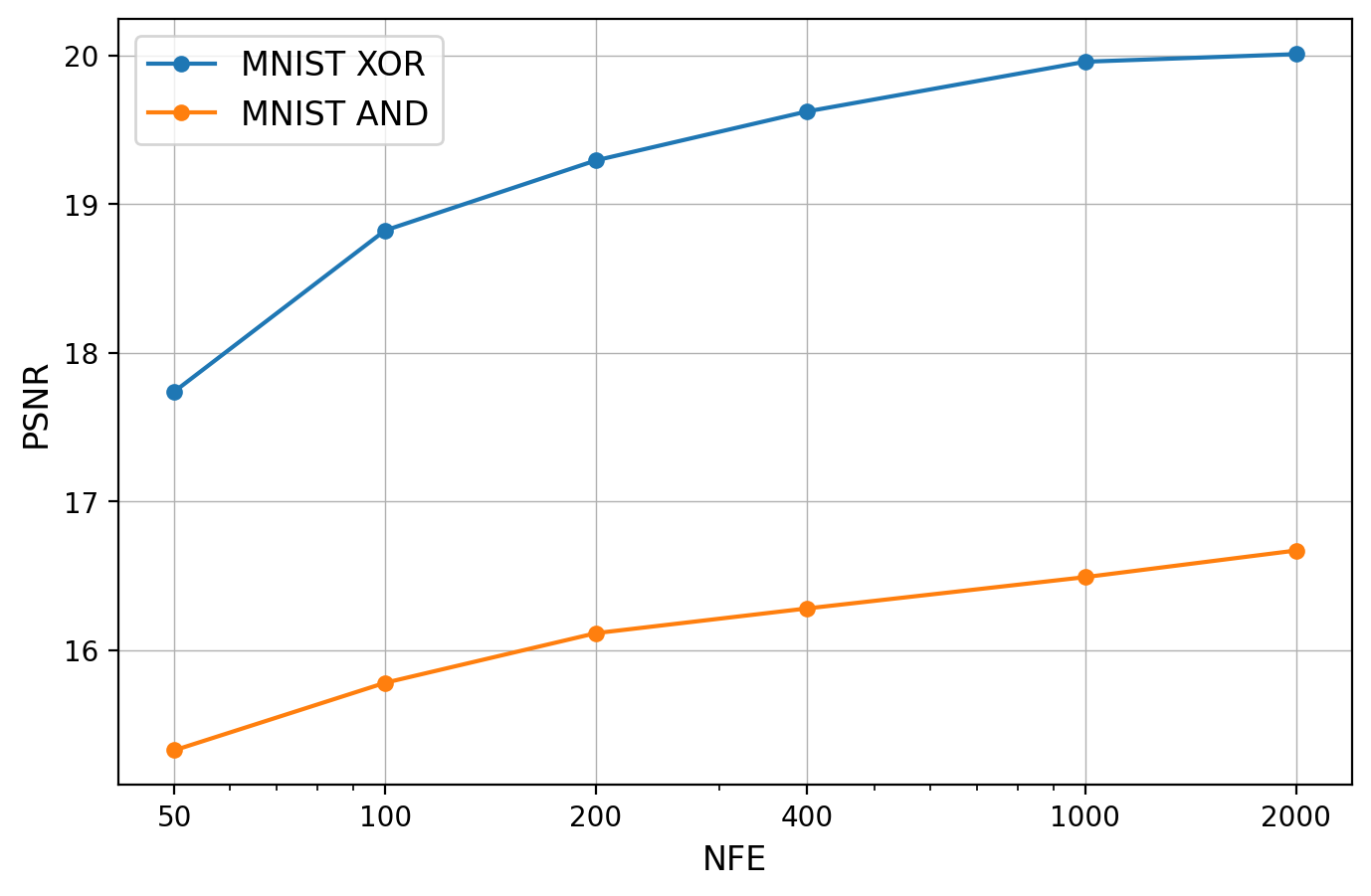}
    \caption{\textbf{Sampling quality vs. computing budget.} The $x$-axis indicates the number of function evaluations of the discrete diffusion, while the $y$-axis shows the PSNR metric. Experiments are done with 10 discretized MNIST samples on XOR and AND tasks.}
    \label{fig:nfe}
\end{minipage}
\end{figure}

\subsection{Effectiveness of the annealing noise schedule.}
\label{appendix:schedule}
We investigate how the choice of noise schedule $\{\eta_k\}_{k=0}^K$ affects the performance of~\method. In our main experiments, we use a geometric noise schedule from~\cite{lou2024discrete}, defined as $\eta_k = \eta_{\min}^{k/K} \eta_{\max}^{1-k/K}$. We argue that an annealing noise schedule $\{\eta_k\}_{k=0}^{K}$ that gradually decays to $\eta_{\min}\approx 0$ allows~\method~to converge faster and more accurately to the posterior distribution.

To validate this hypothesis, we conduct an ablation study on the MNIST AND task. We run our method for $K=50$ iterations and compare the annealing noise schedule with fixed noise schedules at different noise levels, ranging from $10^{-3}$ to $0.5$. We then compute the PSNR between $\rvx^{(k)}$ and the true underlying signal.

As shown in~\cref{fig:schedule}, running~\method~with the annealing noise schedule achieves the highest PSNR and also the fastest convergence speed. Moreover, when $\eta$ is large,~\method~converges quickly to a poor solution, whereas when $\eta$ is small, it converges slowly to an accurate solution. This finding aligns with our intuition that~\method~is most accurate with $\eta \to 0$, while sampling with $\eta > 0$ regularizes and simplifies the sampling problem.

This trade-off between convergence speed and accuracy results in a non-monotonic relationship between PSNR and fixed noise levels: PSNR first increases and then decreases with $\eta$. Therefore, using an annealing noise schedule helps provide both fast initial convergence and accurate posterior sampling as $\eta_k$ approaches 0.

\textbf{Theoretical insights.} Since~\cref{eq:sgs_stationary} holds when $\eta\to 0$, a natural question is why we cannot fix a small $\eta$ to begin with. 

The answer to this question is contained in~\cref{theorem:main}. When fixing a small $\eta$, we also fix a small $t^*$ as $\sigma(t^*) = \eta$. Then the transition matrix $\mQ_\tau$ ($\tau\in [0,t^*]$) will be ill-posed, in the sense that $\mQ_\tau^{[i,j]}$ is either very large or very close to 0. In this case, the relative Fisher information $\mathsf{FI}_{\mQ_\tau}(\mu_\tau\|\pi_\tau)$ becomes a poor criterion of measuring the divergence between $\mu_\tau$ and $\pi_\tau$. To see this, we note that the spectral gap $\gamma(\mQ_\tau)$ has 
\begin{equation}
    \gamma(\mQ_{\tau}) = \min_{\lambda \neq 0} - \mathrm{Re}\lambda(\mQ_\tau) \geq N \epsilon, 
\end{equation}
where $N = |\mathcal X|$, if $\mQ_\tau^{i,j} > \epsilon \geq 0$ for all $i\neq j$. By the logarithmic Sobolev inequality~\cite{bobkov2006modified}, we have that
\begin{equation}
    \mathsf{FI}_{\mQ_\tau} (\mu_\tau\|\pi_\tau) \geq \frac{n\min_{i\neq j} \mQ_\tau^{[i,j]}}{2}\cdot \mathsf{KL}(\mu_\tau\|\pi_\tau)
\end{equation}
Ignoring the errors in the score function and discretization, the KL divergence converges linearly, i.e., 
\begin{equation}
    \mathsf{KL}(\mu_t\|\pi_t) \leq \exp\left(-\frac{n}{4} \int_0^t \min_{i\neq j}\mQ_\tau^{[i,j]} \mathrm d\tau\right) \mathsf{KL}(\mu_0\|\pi_0).
\end{equation}

When $\eta_k$ is large, $\int_0^{t^*} \min_{i\neq j}\mQ_\tau^{[i,j]}\mathrm d\tau$ is considerably large, because the probability ratio $p_t(\rvx)$ is being smoothed by the uniform transition kernel, which makes $\mQ_{t}^{[i,j]}$ large. However, as discussed above, $\int_0^{t^*}\mQ_{\tau}^{[i,j]}\mathrm d\tau$ can be very close to 0 when $\eta_k$ is small, which makes the convergence slow.

\subsection{Time efficiency.}
\label{appendix:runtime}
We run~\method~with different configurations to evaluate its performance under different computing budgets. We control the number of iterations, $K$, and the number of unconditional sampling steps $T$ in each prior sampling step, so the number of function evaluations (NFE) of the discrete diffusion model is $KT$. We evaluate~\method~with NFEs ranging from $100$ to $2$k. The detailed configuration is specified in~\cref{tab:config}.

We present a plot in~\cref{fig:nfe} showing the tradeoff between sample quality and computing budgets in terms of NFEs.~\method~is able to generate high-quality samples with a reasonable budget comparable to unconditional generation.
Note that we calculate the NFEs for SVDD and SMC with the number of Monte Carlo samples $\mathrm{mc} = 1$ and the number of Euler steps $H=1$, while some results in our experiments are done with larger $\mathrm{mc}$ and $H$ for better performance. We report both the number of total NFEs and the number of sequential NFEs that decide the cost of the algorithm when it is fully parallelized.

\begin{table}[ht]
    \centering
    \caption{\textbf{Configurations of~\method~and baseline methods.} For DPS and SVDD-PM, we calculate both the total NFE and the sequential NFE. The numbers are collected from the MNIST XOR task as an example. The runtime is amortized on a batch size of 10 samples with one NVIDIA A100 GPU.}
    \resizebox{\linewidth}{!}{
    \label{tab:config}
    {\small
    \begin{tabular}{lcccc}
    \toprule
    Configuration & $\#$ Iterations & Prior sampling steps & NFE (total/sequential) & Runtime (sec/sample)\\
    \midrule
    DPS & $100$ & - & $102400/100$ & 820\\
    SVDD-PM ($M=20$) & $128$ & - & $2560/128$ & 16 \\
    SMC $(M=20)$ & $128$ & - & $5120/256$ & 21\\
    \midrule
    \method-$50$ &  $25$   & $2$ & $50$ & 4 \\
    \method-$100$  & $25$  &  $4$ & $100$ & 6\\
    \method-$200$ & $40$ & $5$ & $200$ & 8\\
    \method-$400$ & $40$ & $10$ & $400$ & 10\\
    \method-$1$k & $50$ & $20$ & $1000$ & 13\\
    \method-$2$k & $100$ & $20$ & $2000$ & 20\\
    \bottomrule
    \end{tabular}}}
\end{table}

\subsection{Comparison to sampling without prior.}
\label{appendix:wo_prior}
We evaluate the value of discrete diffusion priors in solving discrete inverse problems by comparing our method to the Metropolis-Hastings algorithm without a diffusion prior. We pick the AND problem on discretized MNIST as an example, in which we choose $\gamma D$ random pairs of pixels and compute their AND values as measurement $\rvy$. \cref{fig:prior_compare} shows the reconstructed samples of~\method~and direct MCMC sampling under different levels of measurement sparsity. As demonstrated in the figure, the discrete diffusion model complements the missing information when the measurement is sparse (e.g., $\gamma=2,4,8$), underscoring the importance of discrete diffusion priors in~\method~for solving discrete inverse problems.

\begin{figure}[t]
    \centering
    \includegraphics[width=.7\linewidth]{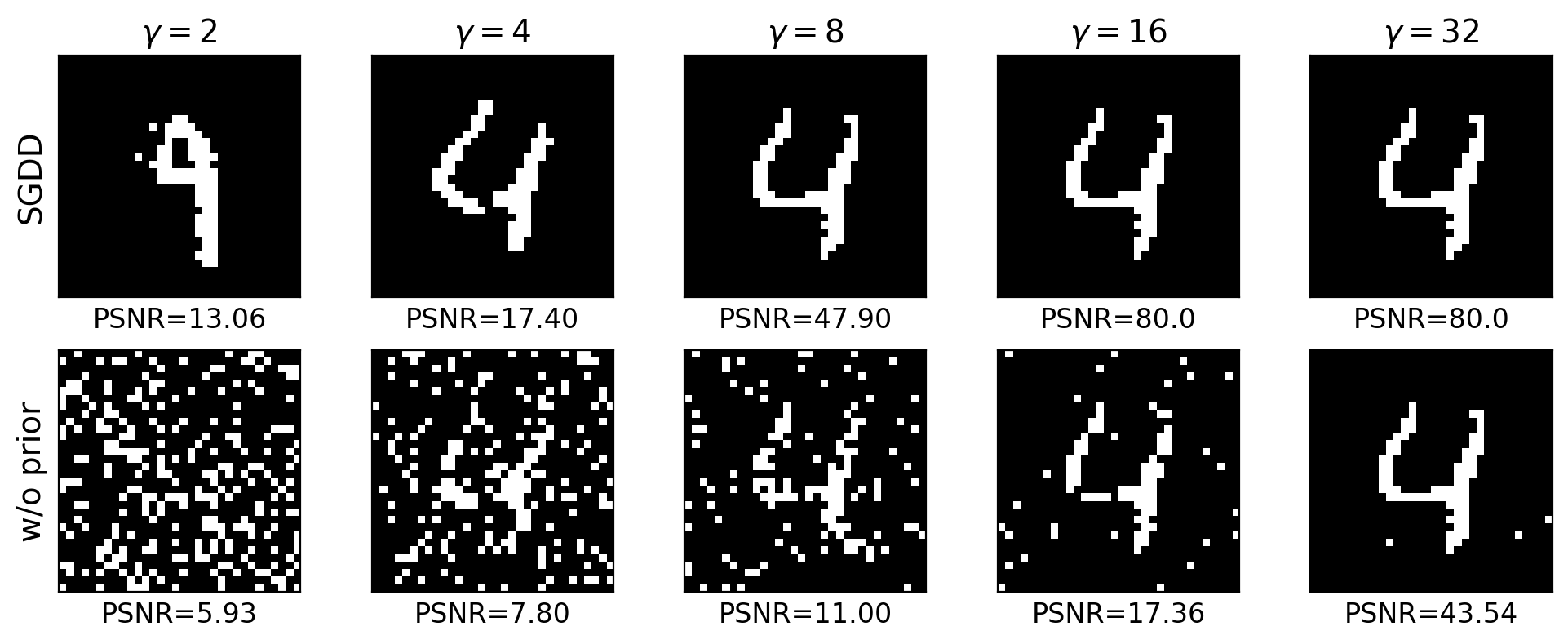}
    \caption{\textbf{Comparison of~\method~and direct sampling without data prior.} The PSNR values are computed with 10 test samples.~\method~replenishes the missing information when the measurement is sparse.}
    \label{fig:prior_compare}
\end{figure}

\newpage
\section{Experimental details}
\label{appendix:exp_details}
\subsection{Pretrained Models}
We learn prior distributions for each dataset using SEDD~\cite{lou2024discrete} discrete diffusion models. We use the SEDD small architecture with around 90M parameters for all experiments, and the models are trained with AdamW~\cite{loshchilov2018decoupled} with batch size $32$ and a learning rate of $3\times 10^{-4}$.

\subsection{Baseline Methods}
\label{appendix:baseline}
\subsubsection{DPS}
\label{subsec:dps}
DPS~\cite{chung2023diffusion} is designed to solve general inverse problems with a pretrained (continuous) diffusion model. It performs posterior sampling from $p(\rvx|\rvy)$ by modifying the reverse SDE 
\begin{align}
    \mathrm d\rvx_t &= -2\dot\sigma_t \sigma_t \nabla_{\rvx_t} \log p(\rvx_t|\rvy)\mathrm dt + \sqrt{2\dot\sigma_t\sigma_t} \mathrm dt\\
    &= -2\dot\sigma_t \sigma_t(\nabla_{\rvx_t} \log p(\rvx_t; \sigma_t)) + \nabla_{\rvx_t} \log p(\rvy|\rvx_t)) \mathrm dt + \sqrt{2\dot\sigma_t\sigma_t} \mathrm dt.
\end{align}

To estimate the intractable guidance term $\nabla_{\rvx_t} \log p(\rvy|\rvx_t) \mathrm dt$, DPS proposes to approximate it with $p(\rvy|\rvx_t) \approx p(\rvy|\mathbb E[\rvx_0|\rvx_t])$. The guidance term is thus approximated by
\begin{equation}
    \nabla_{\rvx_t} \log p(\rvy|\rvx_t) \approx - \nabla_{\rvx_t}  \frac{\|\mG(\mD_\theta(\rvx_t, \sigma_t)) - \rvy\|_2^2}{2\sigma_\rvy^2}, \label{eq:dps}
\end{equation}
where $\mD_\theta$ is a one-step denoiser using the pretrained diffusion model, and the measurement is assumed to be $\rvy = \mG(\rvx) + \rvn$ with $\rvn \sim \mathcal N(0,\sigma_{\rvy}^2 \mI)$. 

However, DPS is not directly applicable to inverse problems in discrete-state spaces since propagating gradients through $\mG$ and $\mD_\theta$ in~\cref{eq:dps} is impossible. Therefore, we consider the counterpart of DPS in discrete spaces. We modify the continuous-time Markov chain of the discrete diffusion model by
\begin{equation}
    \frac{\mathrm dp_{T-t}}{\mathrm dt} =  \mQ_{t}^\rvy p_{T-t},
\end{equation}
in which 
\begin{equation}
    {\mQ}_t^{\rvy\ [i,j]} = \frac{p_{T-t}(\rvx_i|\rvy)}{p_{T-t}(\rvx_j|\rvy)} = \frac{p_{T-t}(\rvx_i)}{p_{T-t}(\rvx_j)} \frac{p_{T-t}(\rvy|\rvx_i)}{p_{T-t}(\rvy|\rvx_j)} = \mQ_t^{[i,j]} \cdot \frac{p_{T-t}(\rvy|\rvx_i)}{p_{T-t}(\rvy|\rvx_j)} 
\end{equation}

Similar ideas are applied to classifier guidance for discrete diffusion models~\cite{nisonoff2024unlockingguidancediscretestatespace}, where the matrix $\mR_t^\rvy = [\frac{p_{T-t}(\rvy|\rvx_i)}{p_{T-t}(\rvy|\rvx_j)}]_{i,j}$ is called a guidance rate matrix. We compute $\mR_t^\rvy$ at $\rvx_j$-column by enumerating every neighboring $\rvx_i$ and calculating $\frac{p_t(\rvy|\rvx_i)}{p_t(\rvy|\rvx_j)}$ for each $\rvx_i$. The discrete version of DPS can be summarized by
\begin{equation}
    \frac{\mathrm dp_{T-t}}{\mathrm dt} =  \mQ_{t} \mR_t^\rvy p_{T-t}.
\end{equation}

However, the discrete version of DPS is very time-consuming, especially when the vocabulary size is large, for it enumerates $(N-1)\times D$ number of neighboring $\rvx$ when computing $\mR_t^\rvy$. We find it slow for binary MNIST ($N=2$) and DNA generation tasks $(N=4)$ but unaffordable for monophonic music generation $(N=129)$.

\subsubsection{SVDD}
SVDD~\cite{li2024derivativefree} aims to sample from the distribution $p^{\beta}(\rvx_0)\ \prop\ p(\rvx_0) \exp(\beta r(\rvx_0))$, which is equivalent to the regularized MDP problem:
\begin{equation}
    p^{\beta}(\rvx_0) = \argmax_{\pi} \mathbb E_{\rvx_0\sim \pi} r(\rvx_0) - \mathsf{KL} (\pi\|p) / \beta 
\end{equation}

They calculate the soft value function as
\begin{equation}
    v_{t}(\rvx_t) =  \log \mathbb E_{\rvx_0 \sim p(\rvx_0|\rvx_t)} [\exp (\beta r(\rvx_0))] /\beta
\end{equation}
and propose to sample from the optimal policy
\begin{equation}
     p_t^\star(\rvx_t| \rvx_{t+1}) \ \prop \ p_t(\rvx_t|\rvx_{t+1})\exp(\beta v_t(\rvx_t)).\label{eq:svdd}
\end{equation}
In time step $t$, SVDD samples a batch of $M$ particles from the unconditional distribution $p_t(\rvx_t|\rvx_{t+1})$, and conduct importance sampling according to $\exp(\beta v_t(\rvx_t))$.

Although~\cite{li2024derivativefree} is initially designed for guided diffusion generation; it also applies to solving inverse problems by carefully choosing reward functions. We consider $r(\rvx) = -\|\mG(\rvx) - \rvy\|_0/\sigma_\rvy$ and $\beta = 1$, so that it samples from the posterior distribution $p(\rvx|\rvy)$. As recommended in~\cite{li2024derivativefree}, we choose $\beta=\infty$ ($\alpha = 0$ in their notation) in practice, so the importance sampling reduces to finding the particle with the maximal value in each iteration.
We use SVDD-PM, a training-free method provided by~\cite{li2024derivativefree} in our experiments. It approximates the value function by $v_t(\rvx_t) = r(\hat\rvx_0(\rvx_t))$, where $\hat\rvx_0(\rvx_t)$ is an approximation of $\mathbb E[\rvx_0|\rvx_t]$. In practice, we find that approximating $\hat\rvx_0(\rvx_t)$ by Monte Carlo sampling with a few-step Euler sampler achieves slightly better results. We use $3$ Monte Carlo samples to estimate $v_t(\rvx_t)$ in our experiments.

\subsubsection{SMC}

Sequential Monte Carlo (SMC) methods evolve multiple particles to approximate a series of distributions, eventually converging to the target distribution. Specifically, in our experiments, we implement the SMC method to sample from $p(\rvx_t|\rvy)$ for $t = T, T-1\dots, 0$, using the unconditional discrete diffusion sampler $p(\rvx_t|\rvx_{t+1})$ as the proposal function. 

We maintain a batch of $M = 20$ particles $\{\rvx^{[m]}\}$. At time $t$, we sample $\rvx_t^{[m]} \sim p(\rvx_t|\rvx_{t+1}^{[m]})$ by the pretrained discrete diffusion model and estimate the likelihood $p(\rvy|\rvx_t^{[m]}) = \mathbb E_{\rvx_0\sim p(\rvx_0|\rvx_t^{[m]})} p(\rvy|\rvx_0)$ by Monte Carlo sampling. We then resample the particles $\{\rvx_t^{[m]}\}$ according to their weights $w_t^{[m]} = p(\rvy|\rvx_{t}^{[m]}) /p(\rvy|\rvx_{t+1}^{[m]})$. In practice, we find that we have to carefully tune a hyperparameter $\beta$, where $w_t^{\beta}$, or otherwise the resampling step can easily degenerate to $\arg\max$ or uniform random sampling.

\subsection{Hyperparameters}
\label{appendix:hyper}
We use an annealing noise schedule of $\eta_k = \eta_{\min}^{k/K}\eta_{\max}^{1-k/K}$ with $\eta_{\min} = 10^{-4}$ and $\eta_{\max} = 20$. We run~\method~for $K$ iterations. In each likelihood sampling step, we run Metropolis-Hastings for $T$ steps, while in each prior sampling step, we run a few-step Euler discrete diffusion sampler with $H$ steps. The hyperparameters used for each experiment are listed in~\cref{tab:hyper}. We also include the dimension of data spaces $\mathcal X^D$ for each experiment in~\cref{tab:hyper}, where $|\mathcal X| = N$. 

\begin{table}[th]
    \centering
        \caption{\textbf{Hyperparameters used in each experiment.}}
    \label{tab:hyper}
    {\small
    \begin{tabular}{lccccc}
       \toprule
         & Synthetic & DNA design & MNIST XOR & MNIST AND & Music infilling  \\
        \midrule
        Metropolis-Hastings $T$ & 10  & 200 & 2000 & 5000 & 5000 \\
        \method~iterations $K$ & 10 & 50 & 50 & 100 & 100  \\
        Euler sampler $H$ &  20 & 20 & 20 & 20 & 20 \\
        \midrule
        Sequence length $D$ & 2/5/10 & 200 & 1024 & 1024 & 256  \\
        Vocab size $N$ & 50  & 4 & 2 & 2 & 129\\
        \bottomrule
    \end{tabular}
}
\end{table}

\newpage
\section{Additional Experimental Results}

\subsection{Solving image restoration problems in discrete latent space.}
\label{appendix:ffhq}

To demonstrate the performance of \method~on higher-dimensional datasets, we conduct an additional experiment on FFHQ 256 with a vector-quantized diffusion model. Specifically, we use a pretrained VQVAE model from~\cite{esser2021tamingtransformershighresolutionimage} and train a discrete diffusion model in the discrete latent space. We test our algorithm on the super-resolution task ($\times$4) using 100 images from the test dataset. While existing methods fail to solve this high-dimensional task with a complicated forward model that includes a latent space decoder (the cost of DPS is unaffordable), our method is able to reconstruct the underlying images with high PSNR values. 

While the latent codes of VQVAE lie in a discrete-state space, they are embedded in an underlying Euclidean space, which makes this problem different from solving inverse problems in categorical distributions without ordinal information. A potential way to speed up and boost the performance of~\method~in solving image restoration problems with VQ-diffusion is to replace the Metropolis-Hastings in discrete latent codes with Langevin dynamics in the continuous embedding space of latent codes. We leave the exploration of this idea as a future direction.

\begin{table}[h]
\begin{minipage}[t]{0.45\textwidth}
    
    \centering
    \caption{Quantitative results for the super-resolution ($\times$4) task on FFHQ 256 dataset.}
    \label{tab:ffhq}
    \begin{tabular}{lcc}
    \toprule
    & PSNR $\uparrow$ & LPIPS $\downarrow$ \\
       \midrule
       SVDD-PM  & $12.08_{\pm 0.39} $ & $0.594_{\pm 0.061}$ \\
        \method & $\textbf{25.85}_{\pm 0.39}$ & $\textbf{0.288}_{\pm 0.039}$ \\
        \bottomrule
    \end{tabular}
\end{minipage}
\hfill
\begin{minipage}[t]{0.45\textwidth}
    \centering
    \caption{Class-conditioned generation on MNIST dataset.}
    \label{tab:cls-condition}
    \begin{tabular}{lcc}
    \toprule
            & MSID $\downarrow$ & Accuracy $\%$ \\
            \midrule
        SMC & $39.13$ & $45.0$ \\
        \method & \textbf{14.59}& \textbf{98.5}\\
        \bottomrule
    \end{tabular}
\end{minipage}
\end{table}

\subsection{More results for inverse problems on discretized image.}
\label{appendix:mnist}

\textbf{Qualitative results.} Here we present the visualization of the reconstructed samples of MNIST from their XOR measurements.

\begin{figure}[H]
    \centering
    \includegraphics[width=.6\linewidth]{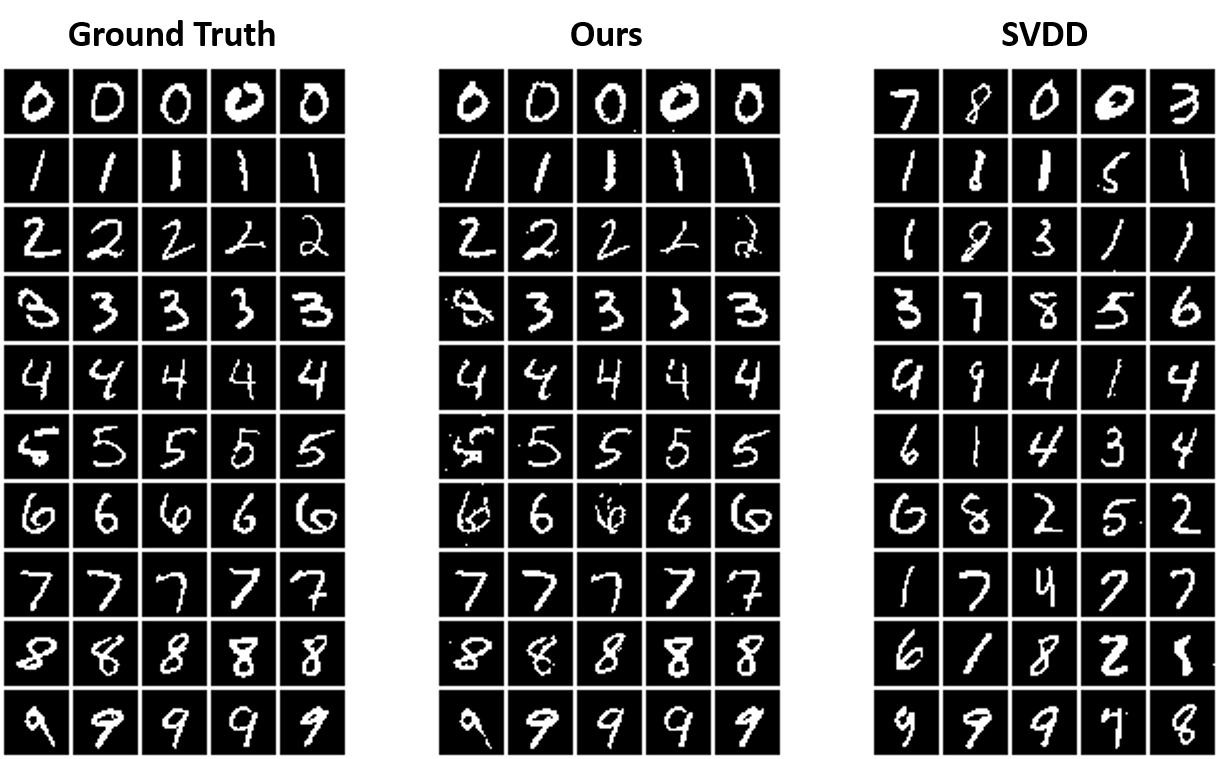}
    \caption{\textbf{Sampling results of the XOR task on the discretized MNIST dataset.} ~\method~faithfully recovers the structural information of the ground truth signal.}
    \label{fig:mnist_xor}
\end{figure}

\textbf{Approximate distribution divergence on class-conditioned generation.} Since a classifier on MNIST data can also be viewed as a forward model, \method~can also be directly applied to sample digit-conditioned samples. Moreover, while the posterior distribution for a general inverse problem is inaccessible, we can approximate the posterior distribution of class-conditioned generation by a digit-specific prior distribution. 

Specifically, we define the posterior distribution as proportional to $p(\rvx) p(-\beta \ell(\mG(\rvx), \rvy))$, where $\ell$ is the cross-entropy loss between the predicted class probability and target class label $\rvy$. We let $\rvy = 0$ to guide diffusion models to generate digits $0$, and extract all digits $0$ in the dataset as the approximated true posterior distribution. We train another classifier with a different random seed to evaluate the generated samples. To measure the divergence of the generated samples from the posterior distribution, we extract the features of MNIST before the last layer of a CNN classification network, and compute the multi-scale intrinsic distance (MSID) score.

Experimental results in~\cref{tab:cls-condition} show that \method~effectively samples from the true posterior distribution, as it achieves a low MSID score and a high class accuracy of $98.5\%$.

\subsection{More experimental results on the synthetic dataset}
\label{appendix:synthetic}

\begin{figure}[h]
    \centering
    \includegraphics[width=.7\linewidth]{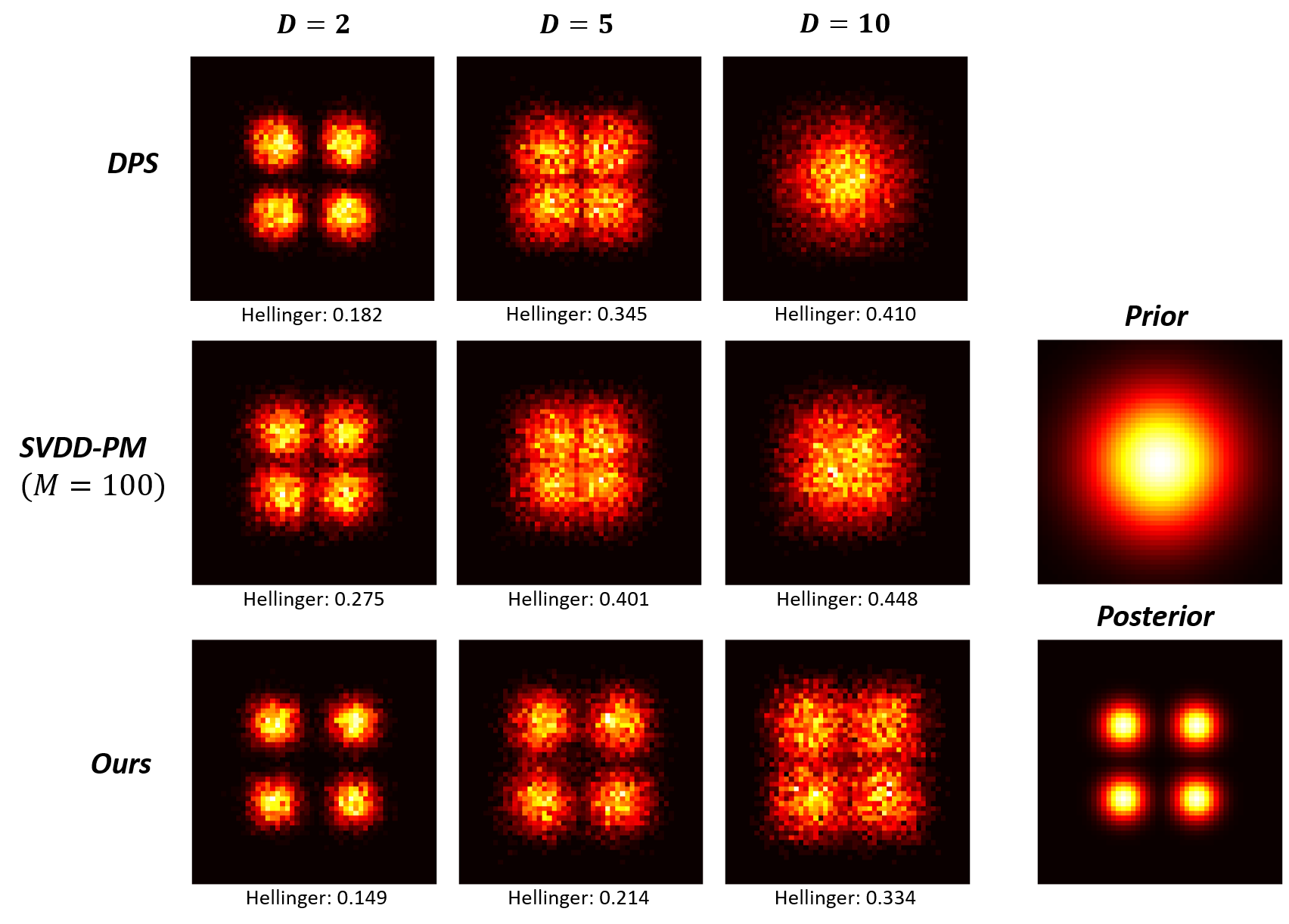}
    \caption{\textbf{Empirical distributions of various sampling algorithms on the synthetic dataset.} Heatmaps are drawn with 10k samples for sequence lengths of $D=2, 5$ and $10$. We project the sequences to the first two dimensions for ease of visualization.}
    \label{fig:gaussian}
\end{figure}

\newpage
\section{Discussions}

\subsection{Limitations and future directions}
\label{appendix:limitation}

Our method is built on discrete diffusion models with a uniform transition matrix. It is unclear how our method can be applied to other types of discrete diffusion models, such as masked diffusion models. Moreover, our method calls the forward model (for inverse problems) or the reward function multiple times while running Metropolis-Hastings in likelihood steps, which potentially increases the computational overhead when the forward model or reward function is complicated. We leave these issues as possible future extensions.

\subsection{Boarder Impacts}

We anticipate that~\method~offers a clean and principled framework for posterior sampling using discrete diffusion models. \method~tackles these problems by leveraging a discrete diffusion model as a general denoiser in the finite space, which models the prior distribution with rich data.
However, we note that it could be the case that~\method~will generate biased samples, if the discrete diffusion model is itself trained on biased data. Therefore, caution should be exercised when using~\method~in certain sensitive scenarios.
\label{appendix:impact}


\end{document}